\documentclass{article}

\usepackage{microtype}
\usepackage{graphicx}
\usepackage{subcaption}
\usepackage{booktabs} 

\usepackage{hyperref}

\usepackage[accepted]{icml2026}

\usepackage{amsmath}
\usepackage{amssymb}
\usepackage{mathtools}
\usepackage{amsthm}

\usepackage{multirow}
\usepackage{comment}
\usepackage{asymptote}

\usepackage{algorithm}
\usepackage{algorithmic}

\usepackage{xcolor}

\usepackage[most]{tcolorbox}
\tcbuselibrary{listings,breakable}
\usepackage{listings}

\tcbuselibrary{listings,breakable}
\usepackage{listings}
\usepackage[T1]{fontenc}

\lstdefinestyle{promptstyle}{
  basicstyle=\ttfamily\mdseries\small\color{black!85},
  columns=fullflexible,
  breaklines=true,
  breakatwhitespace=false,
  keepspaces=true,
  showstringspaces=false,
  upquote=true,
  mathescape=false,
  escapeinside={(*@}{@*)},
  literate=
    {<}{{\textless}}1
    {>}{{\textgreater}}1
    {|}{{\textbar}}1
}

\newtcblisting{promptbox}[1][]{%
  enhanced,
  breakable,
  colback=black!8,
  colframe=black!70,
  boxrule=1.1pt,
  arc=4mm,
  left=10pt,right=10pt,top=8pt,bottom=8pt,
  boxsep=0pt,
  colbacktitle=black!70,
  coltitle=white,
  fonttitle=\bfseries\large,
  title={#1},
  listing only,
  listing options={style=promptstyle}
}

\newtcolorbox{greenbox}{
  enhanced,
  colback=green!8,        
  colframe=green!45!black,
  boxrule=0.9pt,
  arc=2mm,
  left=6pt,right=6pt,top=6pt,bottom=6pt,
}

\newtcolorbox{greenboxT}[1]{
  enhanced,
  colback=green!8,
  colframe=green!45!black,
  boxrule=0.9pt,
  arc=2mm,
  left=6pt,right=6pt,top=6pt,bottom=6pt,
  title=\textbf{#1},
  fonttitle=\normalsize,
}

\newtcolorbox{redbox}{
  enhanced,
  colback=red!6,              
  colframe=red!60!black,      
  boxrule=1.1pt,              
  arc=2.5mm,                  
  left=6pt,right=6pt,top=6pt,bottom=6pt,
}

\newtcolorbox{bluebox}{
  enhanced,
  colback=blue!6,          
  colframe=blue!55!black,  
  boxrule=0.9pt,
  arc=2mm,
  left=1pt,right=6pt,top=6pt,bottom=6pt,
}

\newtcolorbox{qualbox}[1]{
  enhanced,
  breakable,            
  enhanced jigsaw,      
  colback=white,
  colframe=blue!70!black,
  boxrule=1.0pt,
  arc=2.5mm,
  left=6pt,right=6pt,top=6pt,bottom=6pt,
  coltitle=white,
  fonttitle=\bfseries,
  title=#1,
  attach boxed title to top left={yshift=-2mm, xshift=6mm},
  boxed title style={
    colback=blue!80!black,
    colframe=blue!80!black,
    arc=2mm,
    left=6pt,right=6pt,top=4pt,bottom=4pt
  }
}

\usepackage[capitalize,noabbrev]{cleveref}

\theoremstyle{plain}
\newtheorem{theorem}{Theorem}[section]

\newtheorem{lemma}[theorem]{Lemma}
\newtheorem{corollary}[theorem]{Corollary}
\theoremstyle{definition}

\newtheorem{assumption}[theorem]{Assumption}
\theoremstyle{remark}

\newcommand{\esin}[1]{\textcolor{blue}{[Esin: #1]}}

\usepackage[textsize=tiny]{todonotes}

\icmltitlerunning{RLTT: Reward Latent Thought Trajectories}

\begin{document}
\raggedbottom

\twocolumn[
  \icmltitle{Prioritize the Process, Not Just the Outcome:
Rewarding Latent Thought Trajectories Improves Reasoning in Looped Language Models}



  \icmlsetsymbol{equal}{*}

  \begin{icmlauthorlist}
    \icmlauthor{Jonathan Williams}{yyy}
    \icmlauthor{Esin Tureci}{yyy}
  \end{icmlauthorlist}

  \icmlaffiliation{yyy}{Department of Computer Science, Princeton University, Princeton NJ, U.S.A}

  \icmlcorrespondingauthor{Jonathan Williams}{jw4199@princeton.edu}

  \icmlkeywords{Reinforcement Learning, Latent Reasoning, Looped Transformers}

  \vskip 0.3in
]



 \printAffiliationsAndNotice{}

\begin{abstract}
  Looped Language Models (LoopLMs) perform multi-step latent reasoning prior to token generation and outperform conventional LLMs on reasoning benchmarks at smaller parameter budgets. However, attempts to further improve LoopLM reasoning with reinforcement learning have failed—standard objectives such as Group Relative Policy Optimization (GRPO) only assign credit to the final latent state, creating a fundamental mismatch with the model's internal computation. To resolve this, we introduce \textbf{RLTT (Reward Latent Thought Trajectories)}, a reinforcement learning framework which distributes reward across the full latent reasoning trajectory. RLTT provides dense, trajectory-level credit assignment without relying on external verifiers and can directly replace GRPO with negligible overhead. Across extensive experiments with Ouro-1.4B/2.6B-Thinking under identical training and inference conditions, \textbf{RLTT yields statistically significant improvements over GRPO on challenging mathematical reasoning benchmarks, improving mean accuracy over MATH-500, AIME24/26, and BeyondAIME by +5.8\% on the 1.4B scale, and  +10.9\% on the 2.6B scale}. Despite being trained exclusively on mathematics, RLTT also transfers effectively to non-mathematical reasoning benchmarks, demonstrating the effectiveness of trajectory-level credit assignment for reinforcement learning in LoopLMs. Code is available at \url{https://github.com/jonwill8/RLTT.git}.
\end{abstract}

\section{Introduction}

An emerging trend in Large Language Models (LLMs) emphasizes \emph{latent reasoning}: multi-step internal computations performed before emitting each token \citep{zhu2025surveylatentreasoning}. Unlike conventional architectures which execute a single forward pass per output token, latent reasoning architectures allow LLMs to iteratively refine internal representations prior to token generation. This enables additional test-time computation without increasing output length, offering a promising alternative to chain-of-thought reasoning.

Looped Language Models (LoopLMs) provide a concrete architectural realization of latent reasoning \cite{google_loop}. LoopLMs recursively apply shared-weight transformer blocks multiple times before emitting each output token. Notably, the Ouro LoopLM \cite{ouro} demonstrates that scaling latent computation in this manner yields strong reasoning performance without incurring the chain-of-thought verbosity commonly observed in explicit reasoning approaches.

Prior attempts to apply reinforcement learning with verifiable rewards (RLVR) post-training to LoopLMs have not succeeded \cite{ouro}. We posit that these attempts failed because standard policy-gradient objectives such as GRPO only reward the last latent state before token emission, implicitly assuming a single-step decision process. This creates a natural tension with LoopLM computation which unfolds over multiple internal refinement steps.  

\begin{figure*}[t]
    \centering
    \begin{minipage}{0.48\textwidth}
        \centering
        \includegraphics[width=\linewidth]{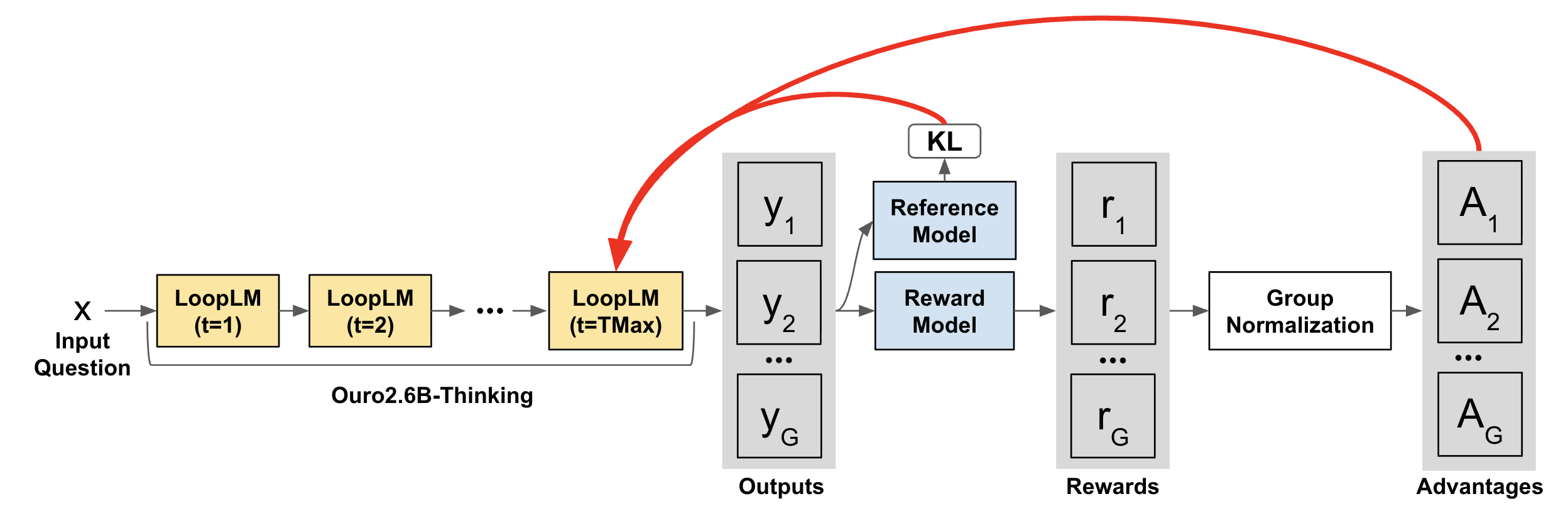}
    \end{minipage}
    \hfill
    \begin{minipage}{0.48\textwidth}
        \centering
        \includegraphics[width=\linewidth]{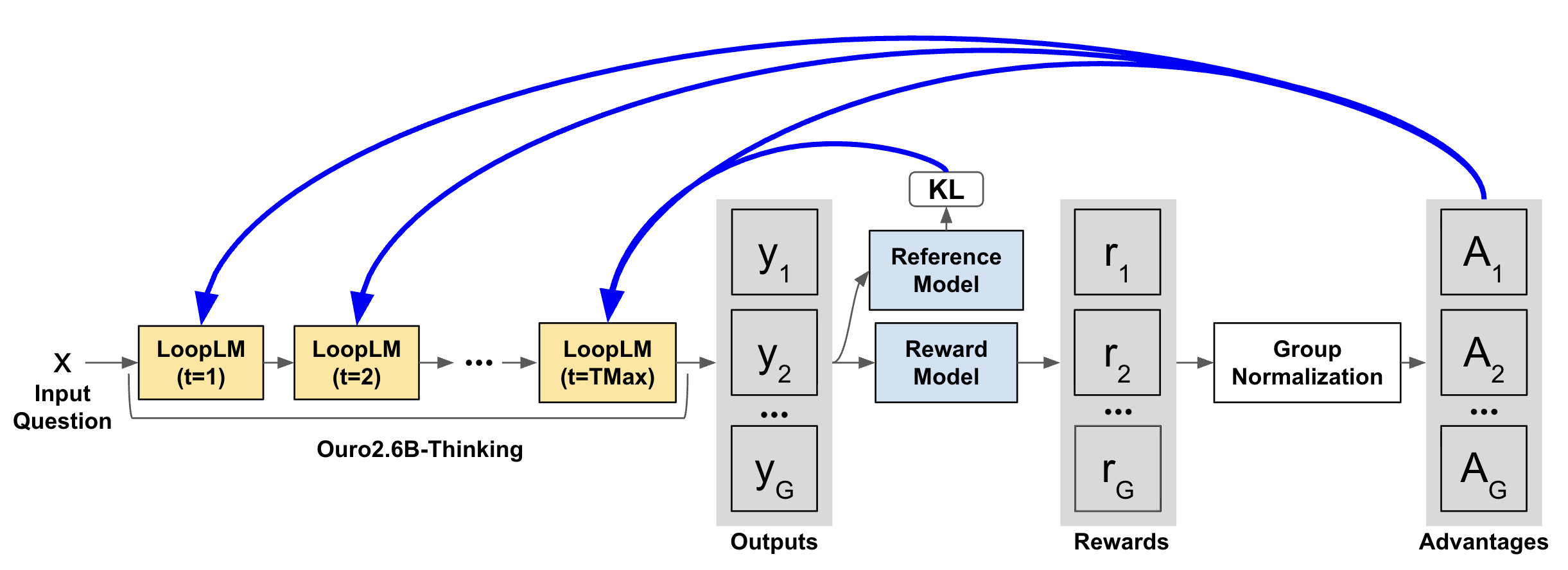}
    \end{minipage}

    \caption{
    Comparison of \textcolor{red}{GRPO} vs.\ \textcolor{blue}{RLTT} credit assignment.
    \textbf{Left:} GRPO only forms a direct relation between reward and the final loop's predicted next token distribution, creating a credit assignment bottleneck.
    \textbf{Right:} RLTT resolves this issue by rewarding the entire latent thought trajectory (i.e., forming a direct relation between reward and \textit{every} loop's predicted next token distribution).
    }
    \label{fig:grpo-vs-rltt}
\end{figure*}

In this work, we introduce RLTT (Reward Latent Thought Trajectories) - a novel reinforcement learning framework built on a simple insight: if a LoopLM reasons across multiple latent iterations, the learning signal should reward that entire trajectory, not just its endpoint. RLTT achieves this by aggregating the LoopLM's next-token distributions across all internal loops and using this aggregate to shape the policy gradient. This yields denser credit assignment, aligns reinforcement learning with the model's iterative computation, and eliminates the need for external verifiers. \textbf{In summary:}

\begin{itemize}
    \item We introduce RLTT, a novel reinforcement learning framework which aligns RLVR with the multi-step latent computation of looped language models.
    \item We evaluate RLTT on mathematical (MATH-500, AIME24/26, BeyondAIME, GSM8k) and non-mathematical (ARC-C, MMLU-ST, GPQA, MBPP) reasoning benchmarks, realizing consistent performance gains over GRPO. 
    \item We identify that RLTT's core advantage over GRPO is superior token efficiency, and provide a theoretical basis explaining \textit{why} RLTT reaches correct answers in fewer tokens.
\end{itemize}

\section{Related Work}

\textbf{Latent Reasoning Language Models. }
A growing set of architectures enable LLMs to perform latent reasoning through multiple internal refinement steps before emitting a token. Looped transformer models such as Ouro \cite{ouro} show that repeatedly reusing the same transformer blocks induces emergent ``latent thoughts" and test-time reasoning gains comparable to chain-of-thought reasoning. Similarly, deep recurrent-depth transformers such as Huginn \cite{huggin} repeatedly apply a shared core block, yielding substantial improvements on reasoning tasks by scaling test-time computation in latent space. These models highlight the effectiveness of latent iterative computation as an alternative to explicit token-level reasoning. 

\textbf{Continuous and Compressed Chain-of-Thought.} Beyond recurrence, several other methods explore reasoning in continuous latent spaces. Coconut \cite{cocnut} replaces discrete chain-of-thought tokens with differentiable continuous thoughts, enabling branching-style reasoning and improved planning behavior. CODI \cite{codi} distills explicit CoT traces into a compact continuous form, while Compressed CoT (CCoT) \cite{ccot} generates a short sequence of contentful latent tokens that encode longer reasoning chains. These works collectively demonstrate that latent-space reasoning can match or exceed explicit CoT.

\textbf{Reinforcement Learning for Latent Reasoning LLMs.} Reinforcement Learning (RL) methods such as GRPO have not lead to substantial reasoning gains for Latent Reasoning LLMs. The authors of Ouro - the current state-of-the art LoopLM - noted that RL was unable to induce significant performance gains over SFT \cite{ouro}. LSRL introduces per-depth process rewards for the recurrent-depth LLM Huginn by decoding each intermediate latent state and scoring it with GPT-4.1 nano, yielding improvements of +4.27\% on GSM8K and +2.06\% on MathQA. \cite{ren-2025-lsrl}. However, decoding intermediate latent states into text and API calling an external grader model introduces significant computational and implementation overhead.

These limitations motivate an alternative approach that assigns credit directly within latent computation and without external supervision. RLTT is purposefully developed for LoopLMs because their architecture yields a next-token distribution at every loop, enabling trajectory-level credit assignment without intermediate decoding. We experimentally evaluate RLTT on Ouro model, to the best of our knowledge, the only open source LoopLM at the time of writing.

\section{RLTT: Reward Latent Thought Trajectories}
\label{sec:math}

\subsection{Definitions}

Let $x$ be the context (prompt) and $y = [y_1,\dots,y_L]$ be the LoopLM's generated response sequence  of length $L$. Suppose that the LoopLM loops for $T_{\max}$ iterations before outputting the next token. Let $h^{(t)}_j$ be the hidden state for the $j$th token after loop iteration $t$. Let $g$ be the language modeling head of the LoopLM which projects token hidden states to the vocabulary dimension. The overall auto-regressive policy induced by the LoopLM (parametrized by $\theta$) is:
\begin{align}
\pi_\theta(y \mid x)
&= \prod_{j=1}^L P^{(T_{\max})}_\theta\!\bigl(y_j \mid x, y_{<j}\bigr), \quad \text{where:}  \notag\\
&\hspace*{-4.0em} P^{(T_{\max})}_\theta\!\bigl(y_j \mid x, y_{<j}\bigr) = \mathrm{Softmax}\!\bigl(\mathrm{g}(h^{(T_{\max})}_j)\bigr) 
\label{eq:policy-final-loop}
\end{align}
i.e., sampling uses only the terminal-loop ``final thought distribution" $P^{(T_{\max})}_\theta$, while intermediate ``latent thought distributions":
\begin{align}
P^{(t)}_\theta\bigl(y_j \mid x, y_{<j}\bigr)
&= \mathrm{Softmax}\bigl(\mathrm{g}(h^{(t)}_j)\bigr),  \notag\\
&\hspace*{-4.0em} \quad t = 1,\dots,T_{\max}-1
\label{eq:loop-dists}
\end{align}
are normally treated as unobserved computation. 

\subsection{RLTT Policy Gradient}

Let $r$ be an outcome-based reward function of the ground truth answer $a$ (i.e., it outputs 1 if $y$ matches $a$, 0 otherwise). Consider sampling $g$ rollouts $y_1,y_2, \cdots y_g$ for a given prompt $x$, and computing associated rewards $r_1, r_2, \cdots r_g$. A standard REINFORCE-style policy gradient takes the form:
\begin{align}
\nabla_\theta J_{\text{standard}}(\theta)
&= \mathbb{E}_{x \sim \mathcal{D},\, \{y_i\}_{i=1}^g \sim \pi_\theta(\cdot \mid x)} \notag\\
&\hspace*{-5.0em} \Biggl[ \frac{1}{g} \sum_{i=1}^g
    \frac{1}{|y_i|}
    \sum_{j=1}^{|y_i|}
        \nabla_\theta
        \log P^{(T_{\max})}_\theta\!\bigl(y_{i,j} \mid x, y_{i,<j}\bigr)\,
        \hat{A}_i
\Biggr]
\label{eq:std-gradient}
\end{align}
where $\hat{A}_i$ is the advantage computed for the $y_i$ rollout via $\hat{A}_i = \frac{r_i - \text{mean}(\{r_1,r_2, \cdots r_g\})}{\text{std}(\{r_1,r_2, \cdots r_g\})}$. Because this standard REINFORCE-style policy gradient is only a direct function of $P^{(T_{\max})}_\theta$, it treats each token as if it were generated in a single decision step, even though a LoopLM actually computes a full latent thought trajectory $h^{(1)}_j \to \cdots \to h^{(T_{\max})}_j$ before emitting $y_j$.

RLTT instead distributes credit across the entire latent thought trajectory by replacing the single log-probability
$\log P^{(T_{\max})}_\theta$ with a weighted sum over loops:
\begin{align}
\nabla_\theta J_{\text{RLTT PG}}(\theta)
&= \mathbb{E}_{x \sim \mathcal{D},\, \{y_i\}_{i=1}^g \sim \pi_\theta(\cdot \mid x)} \notag\\
&\hspace*{-6.0em} \Biggl[ 
    \frac{1}{g} \sum_{i=1}^g
    \frac{1}{|y_i|}
    \sum_{j=1}^{|y_i|}
    \sum_{t=1}^{T_{\text{max}}}
        \omega_t \: \nabla_\theta 
        \log P^{(t)}_\theta\bigl(y_{i,j} \mid x, y_{i,<j}\bigr) \hat{A}_i
\Biggr]
\label{eq:rltt-gradient}
\end{align}
where $\omega_t \ge 0$ are loop weights satisfying
$\sum_{t=1}^{T_{\max}} \omega_t = 1$.

From a credit-assignment viewpoint, RLTT:
(i) enforces the gradient is a direct function of the latent thought distribution from all loops, preventing the reward signal from having to back-propagate solely through the final thought distribution of the terminal loop, and  
(ii) enforces that the entire latent thought distribution trajectory $P^{(1)}_{\theta_j} \to \cdots \to P^{(T_{\text{max}})}_{\theta_j} $ is aligned with the rewarded outcome - encouraging the LoopLM for quick convergence to high advantage thinking patterns.

\textbf{Adding Reference Policy KL regularization.} To preserve the general language abilities of $\pi_\theta$, we include a KL-divergence regularization term in the final RLTT objective:
\begin{equation}
    J_{\text{RLTT}}(\theta)
    = J_{\text{RLTT PG}}(\theta)
    +  \beta \mathrm{D_{KL}}\bigl(
    \pi_\theta
    \,\Vert\,
    \pi_{\mathrm{ref}}
\bigr)
    \label{eq:rltt-total-loss}
\end{equation}
where:
\begin{align}
J_{\text{RLTT PG}}(\theta)
&= -\mathbb{E}_{x \sim \mathcal{D},\, \{y_i\}_{i=1}^g \sim \pi_\theta(\cdot \mid x)} \notag\\
&\hspace*{-5.0em}\Biggl[
\frac{1}{g} \sum_{i=1}^g
\frac{1}{|y_i|}
\sum_{j=1}^{|y_i|}
\sum_{t=1}^{T_{\text{max}}}
\omega_t\,
\log P^{(t)}_\theta\bigl(y_{i,j} \mid x, y_{i,<j}\bigr)\,
\hat{A}_i
\Biggr]
\label{eq:rltt-pg-loss}
\end{align}
\begin{align}
\mathrm{D_{KL}}\bigl(\pi_\theta \,\Vert\, \pi_{\mathrm{ref}}\bigr)
&= \mathbb{E}_{x \sim \mathcal{D},\, \{y_i\}_{i=1}^g \sim \pi_\theta(\cdot \mid x)} \notag\\
&\hspace*{-6.45em}
\begin{aligned}
    \Biggl[ &\frac{1}{g} \sum_{i=1}^g
    \frac{1}{|y_i|}
    \sum_{j=1}^{|y_i|} \\[-0.3em]
    &\quad \mathrm{D_{KL}}\!\Bigl(
        P^{(T_{\text{max}})}_\theta(\cdot \mid x, y_{i,<j})
        \,\Vert\,
        P^{(T_{\text{max}})}_{\text{ref}}(\cdot \mid x, y_{i,<j})
    \Bigr) \Biggr]
\end{aligned}
\label{eq:rltt-kl-loss}
\end{align}

The reference model $\pi_{\text{ref}}$ is a frozen copy of $\pi_{\theta}$ from before any RLVR post-training. 

\section{Method}
\label{sec:method}
RLTT serves as a direct GRPO alternative for LoopLMs. It only assumes that the latent thought trajectory $h^{(1)}_j \to h^{(2)}_j \to  \cdots \to h^{(T_{\max})}_j$ of an arbitrary $j$th token can be extracted and mapped to a latent thought distribution trajectory as in~\eqref{eq:loop-dists}.

\subsection{Loop Weighting Strategies}

RLTT requires only a choice of loop weights $\{\omega_t\}_{t=1}^{T_{\max}}$, which determine how strongly each latent thought distribution is credited. Weights in all cases are deterministic given the total loop count.

\textbf{Exit PDF:}
\[
\omega_t = p_{\text{exit}}(t \mid x).
\]

This weighting is applicable for LoopLMs such as Ouro which utilize early-exit mechanisms to efficiently terminate looping. Each latent thought is credited the probability produced by Ouro's learned exit head that computation terminates at the $t^{\text{th}}$ loop.

\textbf{Progressive:}
\begin{equation}
    \omega_t = \frac{t^\alpha}{\sum_{s=1}^T s^\alpha}, 
    \qquad \alpha \ge 0.
\end{equation}
 This weighting ensures that later loops receive more weight, reflecting the intuition that later refinements are closer to the true next token distribution.

\textbf{Uniform:}
\begin{equation}
    \omega_t = \frac{1}{T_{\max}}, \qquad t = 1,\dots,T_{\max}.
\end{equation}
In this weighting schema every loop is treated as an equally valid draft model. This encourages the model to form the correct next token distribution as early as possible and maintain it across the latent thought trajectory.

\subsection{RLTT Algorithm}
\begin{algorithm}[t]
\caption{RLTT: Reward Latent Thought Trajectories}
\label{alg:rltt}
\begin{algorithmic}[1]

\STATE[] \hspace{-1.5em}\textbf{Input:} Prompts $\{x_i\}_{i=1}^N$, policy parameters $\theta$,
reference parameters $\theta_{\mathrm{ref}}$,
rollouts per prompt $g$, max loop depth $T_{\max}$,
loop weights $\{\omega_t\}_{t=1}^{T_{\max}}$, KL coefficient $\beta$

\STATE[] \hspace{-1.5em}\textbf{Output:} Updated parameters $\theta_{\text{new}}$

\setcounter{ALC@line}{0}

\FOR{$i = 1$ \textbf{to} $N$}
    \STATE Sample $g$ rollouts $\{y_{i,k}\}_{k=1}^g \sim \pi_\theta(\cdot \mid x_i)$ (Eq.~\ref{eq:policy-final-loop})

    \FOR{$k = 1$ \textbf{to} $g$}
        \STATE Compute reward $r_{i,k} \leftarrow R(a_i, y_{i,k})$
    \ENDFOR

    \STATE Compute advantages $\{\hat{A}_{i,k}\}_{k=1}^g$ from $\{r_{i,k}\}_{k=1}^g$

    \FOR{$k = 1$ \textbf{to} $g$}
        \FOR{$j = 1$ \textbf{to} $|y_{i,k}|$}
            \FOR{$t = 1$ \textbf{to} $T_{\max}$}
                \STATE Record per-loop log-probability
                $\ell_{i,k,j,t} = \log P^{(t)}_\theta(y_{i,k,j} \mid x_i, y_{i,k,<j})$
            \ENDFOR
            \STATE Record $P^{(T_{\text{max}})}_\theta(\cdot \mid x_i, y_{i,k,<j})$ KL divergence to reference policy $P^{(T_{\text{max}})}_{\theta_{\text{ref}}}(\cdot \mid x_i, y_{i,k,<j})$
            (Eq.~\ref{eq:rltt-kl-loss})
        \ENDFOR
    \ENDFOR
\ENDFOR

\STATE Compute the RLTT objective $J_{\mathrm{RLTT}}(\theta)$ (Eq.~\ref{eq:rltt-total-loss})

\STATE $\theta_{\text{new}} \leftarrow
\textsc{OptimizerStep}(\theta, \nabla_\theta J_{\mathrm{RLTT}}(\theta))$

\end{algorithmic}
\end{algorithm}

\textbf{Computational cost.} RLTT introduces negligible additional compute relative to GRPO: the per-loop logits are already produced during the LoopLM forward pass, and RLTT’s only extra arithmetic is a weighted sum across loops \eqref{eq:rltt-pg-loss}, which is linear in $T_{\max}$. The practical overhead is instead memory, which scales linearly with the number of loops because RLTT must retain per-loop log-probabilities to form the trajectory-weighted objective. 
Under fixed GPU memory, this increased footprint reduces the maximum number of tokens that can be packed per GPU per optimization step (i.e., \texttt{ppo\_max\_token\_len\_per\_gpu}).
In our runs, we were forced to set \texttt{ppo\_max\_token\_len\_per\_gpu} to 8192 (half of GRPO’s value) and compensate via additional mini-steps. 

\section{Experiments}
\textbf{Experimental Context. } We evaluate whether RLTT improves mathematical reasoning performance in LoopLMs. We use Ouro as the base architecture, and compare RLTT against GRPO under strictly compute-matched training conditions: both methods are trained on the same MATH \cite{math} samples, with identical rollout budgets, optimization settings, reward functions, and advantage normalization. We use exit-probability weighting for RLTT, leveraging Ouro’s learned halting signal as a proxy for internal confidence -  loops with lower exit probability are treated as less reliable and receive proportionally less credit. Additional experimental details are provided in \ref{ssec:experimental-details}.

\subsection{Training Dynamics}
\label{ssec:train-dynamics}

\subsubsection{Reward Evolution}
\label{sssec:reward-over-time}

Figure \ref{fig:reward-over-time} shows the evolution of reward during training. We observe that RLTT achieves consistently higher reward than GRPO, with the gap emerging early and widening steadily throughout training.

\begin{figure}[h]
    \includegraphics[width=1.0\linewidth]{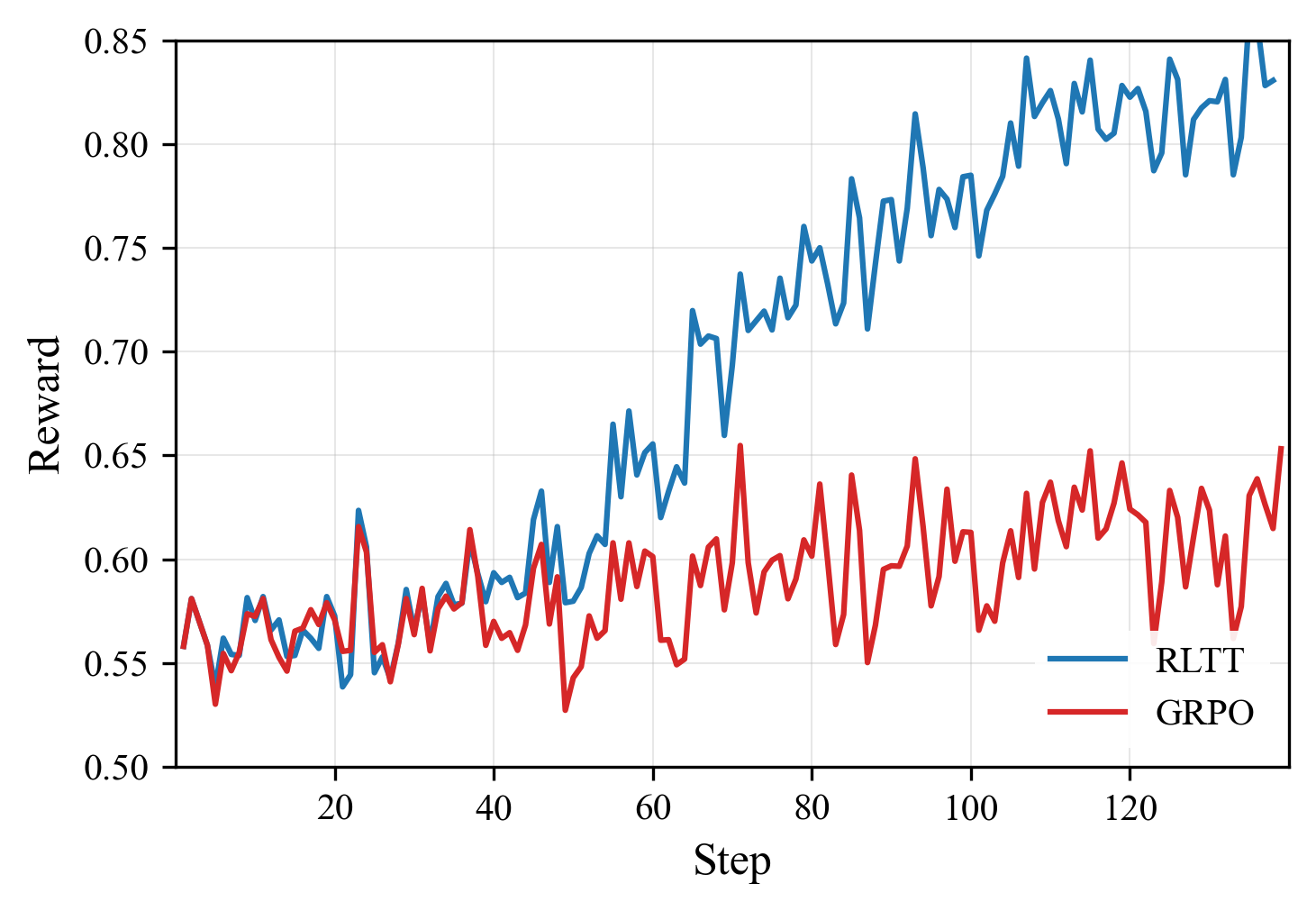}
    \caption{\textbf{Reward evolution during MATH training.} Mean reward (binary correctness) over training steps under RLTT \& GRPO. RLTT achieves consistently higher reward than GRPO throughout training, with the performance gap emerging within the first 40 steps and widening steadily thereafter.}
    \label{fig:reward-over-time}
\end{figure}

This improvement stems directly from trajectory-level credit assignment. GRPO attributes reward only to the final latent state before token emission, forcing the learning signal to back-propagate through multiple latent refinements. RLTT instead distributes credit across the entire reasoning trajectory, allowing intermediate steps to receive meaningful gradients. This reduces the effective credit-assignment horizon, producing more informative updates and accelerating policy improvement. Because both methods are trained under identical settings, the observed gains indicate that RLTT offers a genuine improvement in learning.

\subsubsection{Response Length Evolution}
\label{sssec:response_len-over-time}

\begin{figure}[h]
    \includegraphics[width=1.0\linewidth]{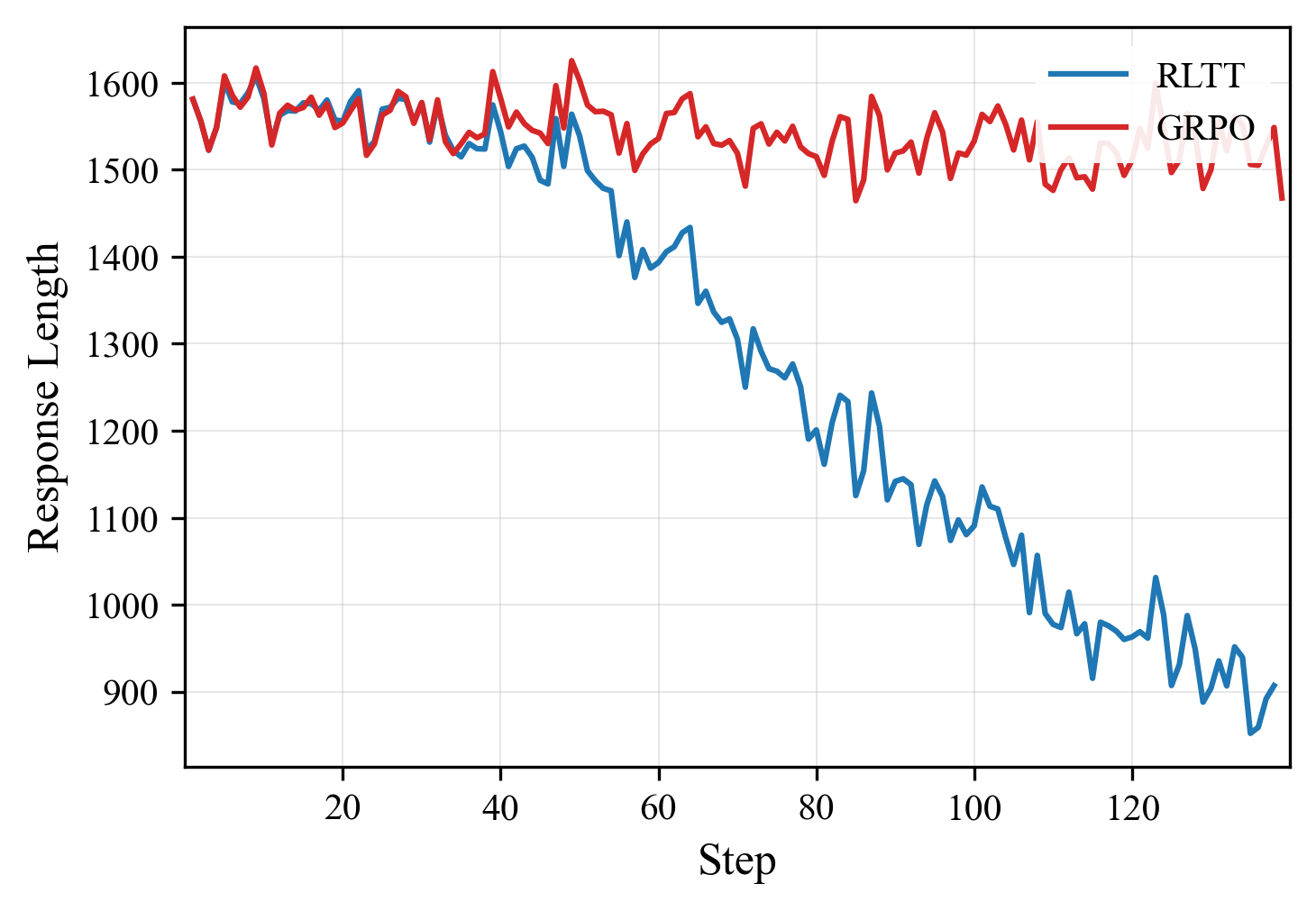}
    \caption{\textbf{Response length evolution during MATH training.} Mean generated response length over training steps. RLTT-trained policies converge to substantially shorter responses than GRPO, despite the reward function depending exclusively on final-answer correctness with no explicit brevity incentive.}
    \label{fig:response_length-over-time}
\end{figure}

Figure \ref{fig:response_length-over-time}, examines the evolution of response length during training, providing insight into how RLTT alters generation behavior. 
A key observation is that RLTT steadily converges to shorter responses over training. 
Crucially, this effect cannot be attributed to any explicit incentive for brevity: the reward function depends exclusively on final answer correctness, and early termination or concise formatting alone provides no advantage. 
Instead, reduced response length emerges as a downstream consequence of optimizing the RLTT training objective. 

Under RLTT, intermediate latent states are directly aligned with the final outcome, encouraging the model for faster internal convergence to correct reasoning patterns. The observed response length reduction indicates that RLTT  induces more efficient latent reasoning - thereby helping eliminate unproductive overthinking and redundant verification, a phenomenon commonly observed in explicit reasoning models \cite{llm-overthink}. 

\begin{table}[h]
\centering
\small
\begin{tabular}{lccc}
\toprule
Method & Min / Step  & Total Time (hrs) & Rel. Cost \\
\midrule
GRPO &  23.3 $\pm$ 8.31 & 54.42 & 1.00$\times$ \\
RLTT &  21.1 $\pm$ 9.87 & 49.05  & 0.90$\times$  \\
\bottomrule
\end{tabular}
\caption{ \textbf{MATH training time comparison. } Both GRPO \& RLTT are trained for 140 total optimization steps on identical hardware. Min / Step reports the time mean and standard deviation across all training steps. RLTT achieves a 10.0\% reduction in training time.
}
\label{tab:training_time_minutes}
\end{table}

RLTT's emergent response length shortening translates into tangible time savings. As shown in Table \ref{tab:training_time_minutes}, RLTT induces a nontrivial $10$\% reduction in training time. However, this efficiency gain should be interpreted as a secondary effect of improved latent reasoning, not as an optimization target in itself.


\subsubsection{Entropy Evolution}
\label{sssec:entropy-over-time}

\begin{figure}[h]
    \includegraphics[width=1.0\linewidth]{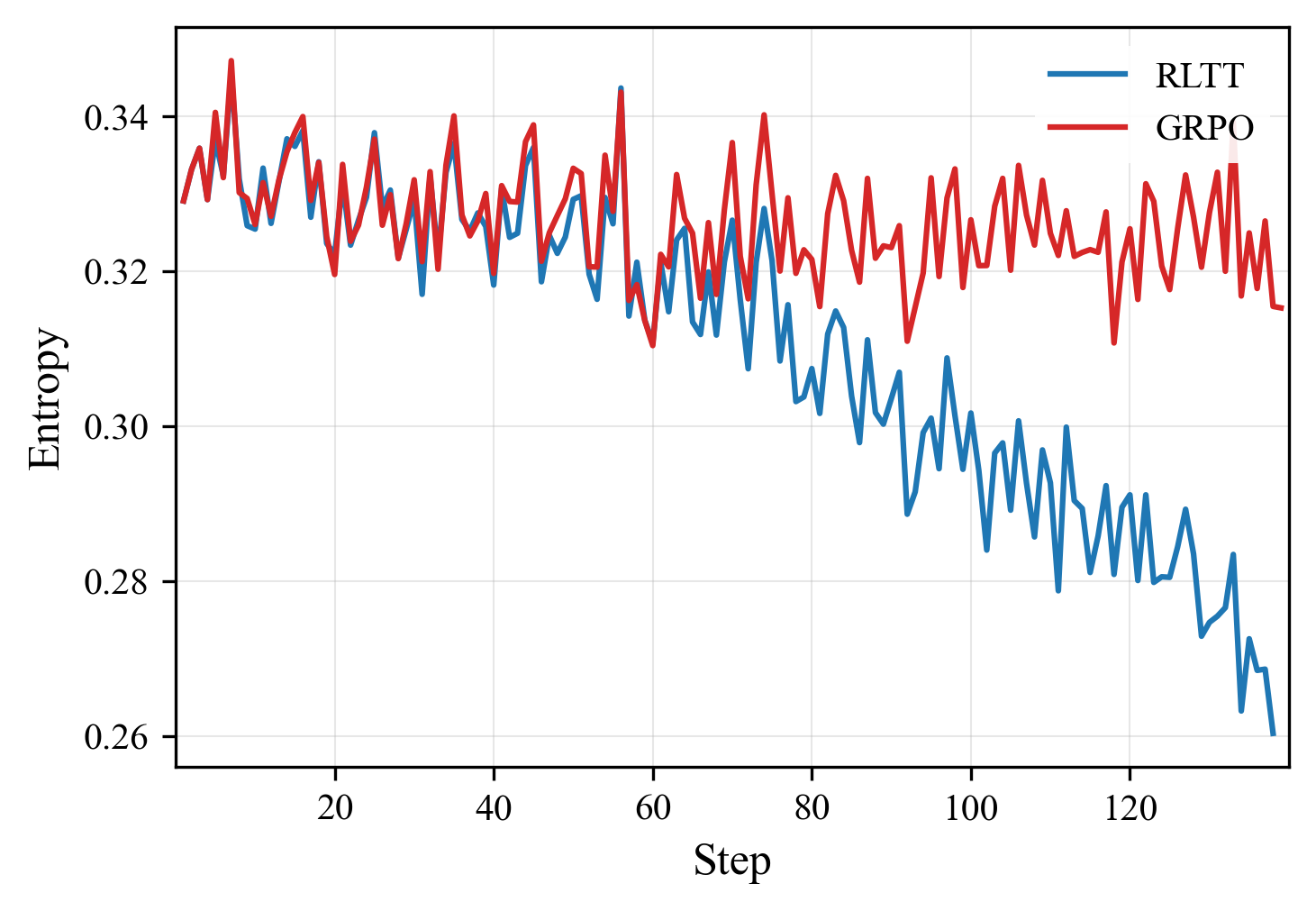}
    \caption{\textbf{Output token entropy evolution during MATH training.} Mean entropy of the terminal-loop next-token distribution over training steps. RLTT exhibits a steeper and more sustained entropy reduction than GRPO, reflecting increased model confidence as correct reasoning trajectories stabilize.}
    \label{fig:entropy-over-time}
\end{figure}

Figure \ref{fig:entropy-over-time} tracks the evolution of output token entropy during training, shedding light on how RLTT affects the model’s uncertainty. The large entropy differential, paired with the observed reward improvement and shorter responses raises a natural question - does RLTT suffer from entropy collapse?

We argue that precisely because entropy smoothly decreases in tandem with the reward increase/response length decrease, the observed entropy reduction should be interpreted as controlled confidence rather than degeneration: as correct reasoning trajectories stabilize, fewer alternatives remain plausible. To concretely rule out entropy collapse, we provide a dedicated Pass@k analysis in Appendix \ref{ssec:pass-at-k_analysis}, which shows that RLTT retains strong diversity and favorable scaling under increased sampling.

\begin{table*}[t]
\centering
\small
\setlength{\tabcolsep}{2.2pt}
\begin{tabular}{lcccccc|ccccc}
\toprule
& \multicolumn{6}{c|}{\textbf{Math Benchmarks}}
& \multicolumn{5}{c}{\textbf{Non-Math Benchmarks}} \\
\cmidrule(lr){2-7} \cmidrule(lr){8-12}
\textbf{Model}
& \textbf{MATH-500}
& \textbf{AIME24}
& \textbf{AIME26}
& \textbf{BeyondAIME}
& \textbf{GSM8K}
& \textbf{Avg.}
& \textbf{ARC-C}
& \textbf{MMLU-ST}
& \textbf{GPQA}
& \textbf{MBPP}
& \textbf{Avg.} \\
\midrule
DeepSeekR1-1.5B
& 60.6 & 3.33 & 3.33 & 2.0 & 77.0  & 29.3
& 55.5 & 54.5 & 6.57 & 30.4 & 36.7 \\
DeepSeekR1-7B
& 70.4 & 16.7 & 6.67 & 3.0 & 89.0 & 37.2
& 82.5 & 74.4 & 11.1 & 49.7 & 54.4 \\
\midrule
Qwen3-1.7B
& 54.2 & 0.0 & 0.0 & 0.0 & 80.1 & 26.9
& 84.0 & 71.4 & 4.55 & 43.7 & 50.9 \\
\quad + GRPO & 54.6 & 0.0 & 0.0 & 0.0 & 81.3 & 27.2
& 83.4 & 71.2 & 5.1 & 44.7 & 51.1 \\
Qwen3-4B
& 63.0 & 3.33 & 3.33 & 0.0 & 88.0 & 31.5
& 90.5 & 77.1 & 11.1 & 56.5 & 58.8 \\
\quad + GRPO & 62.2 & 3.33 & 3.33 & 0.0 & 89.8 & 31.7
& 91.5 & 77.6 & 9.1 & 56.5 & 58.7 \\
\midrule
Ouro1.4B-Thinking
& 73.2 & 16.7 & 13.3 & 4.0 & 90.7 & 39.6
& 90.7 & 78.4 & 12.1 & 53.6 & 58.7 \\
\quad + GRPO
& 77.4 & 16.7 & 16.7 & 6.0 & 91.7 & 41.7
& 90.7 & 81.1 & 14.7 & 51.6 & 59.5 \\
\quad + RLTT
& \underline{81.2} & \underline{26.7} & \underline{20.0} & \underline{12.0} & 90.3 & \underline{46.0}
& 91.9 & 83.8 & \underline{26.8}  & 56.7 & 64.8 \\
\midrule
Ouro2.6B-Thinking
& 75.6 & 13.3 & 6.67 & 5.0 & 93.6  & 38.8
& 93.6 & 84.4 & 18.7 & \underline{61.3} & 64.5 \\
\quad + GRPO
& 79.0 & 16.7 & 16.7 & 6.0 & \underline{93.9} & 42.5
& \underline{93.7} & \underline{86.1} & 19.7 & \underline{61.3} & \underline{65.2} \\
\quad + RLTT
& \textbf{86.0} & \textbf{33.3} & \textbf{26.7} & \textbf{16.0} & \textbf{94.0} & \textbf{51.2}
& \textbf{94.4} & \textbf{89.6} & \textbf{38.4} & \textbf{64.6} & \textbf{71.8} \\
\bottomrule
\end{tabular}

\caption{\textbf{Performance comparison on mathematical and non-mathematical reasoning benchmarks.} Best results are \textbf{bolded}, second-best are \underline{underlined}. RLTT outperforms GRPO across all benchmarks. Statistical significance analysis \ref{ssec:stat-analysis} confirms that RLTT's improvements over GRPO are significant (p < 0.05) across 7/9 benchmarks at the 1.4B scale, and 8/9 benchmarks at the 2.6B scale.
}
\label{tab:main_results}
\end{table*}

\subsection{Math Benchmarks}
\label{ssec:math-experiments}

\textbf{Evaluation Protocol:} To investigate if RLTT’s training improvements generalize to both in-distribution and out-of-distribution mathematical reasoning tasks, we perform zero-shot evaluation on MATH-500 \cite{math}, AIME24 \cite{aime24}, AIME26 \cite{aime26}, BeyondAIME \cite{beyondaime}, and GSM8K \cite{gsmm8k}. All evaluations use deterministic decoding, exact-match answer parsing, and fixed benchmark-specific inference budgets.

\textbf{Inference Budgets}. Because GRPO-trained policies produce longer math solutions than RLTT (as evidenced by \ref{sssec:response_len-over-time}), we evaluate GRPO with an expanded decode budget on the math benchmarks. Specifically, GRPO is given a 3072-token budget for all math evaluations. RLTT is evaluated with 2048 tokens for MATH-500, 512 tokens for GSM8K, and 3072 tokens for the remaining benchmarks.

\textbf{Evaluation Analysis.} Table \ref{tab:main_results} summarizes performance across all math benchmarks. RLTT consistently and substantially outperforms GRPO, achieving gains of \textbf{+7.0\%} on MATH-500, \textbf{+16.6\%} on AIME24, \textbf{+10.0\%} on AIME26, and \textbf{+10.0\%} on BeyondAIME on the 2.6B scale, and \textbf{+3.8\%} on MATH-500, \textbf{+10.0\%} on AIME24, \textbf{+3.7\%} on AIME26, and \textbf{+6.0\%} on BeyondAIME on the 1.4B scale. Ouro-1.4B+RLTT reaches 46.0\% mean benchmark accuracy, outperforming both Qwen3-1.7B+GRPO (27.2\%) and Qwen3-4B+GRPO (31.7\%), while Ouro-2.6B+RLTT further improves to 51.2\%, demonstrating that RLTT compares favorably to non-looped GRPO baselines on mathemeatical reasoning tasks at the 3072 token budget scale.

The gains on AIME24, AIME26, and BeyondAIME are particularly informative. These benchmarks represent the most challenging reasoning regimes in our evaluation suite, requiring sustained multi-step reasoning under strict inference constraints. In these settings, GRPO frequently fails to reach a valid solution before exhausting the token budget. RLTT’s improvements therefore primarily arise from enabling the model to arrive at correct solutions without superfluous token-level exploration. This interpretation is consistent with the training dynamics analyses in Section \ref{ssec:train-dynamics}, which show that RLTT achieves correct latent reasoning trajectories in fewer decoding steps. 

Importantly, \textit{RLTT’s advantage is not limited to the specific inference budget used during training.} Appendix \ref{ssec:budget-robust} evaluates robustness across a wide range of decode lengths on MATH-500, spanning heavily constrained (1024 tokens) to substantially looser (4096 tokens) regimes. RLTT consistently outperforms GRPO across this entire spectrum, including at 4096 tokens—far beyond the 2048-token training horizon. This indicates that RLTT does not overfit to a particular decode length, but instead learns reasoning policies that generalize across inference-time compute regimes.

Further insight comes from analyzing reasoning capacity at the loop level. Appendix \ref{ssec:per-loop-performance} shows that \textit{RLTT outperforms GRPO at every evaluated loop count, with especially large margins under 1–2 loop regimes.} This result directly supports the claim that RLTT improves the effectiveness of early latent reasoning iterations, explaining its robustness under tight token and loop budgets.

To verify that these improvements are not artifacts of deterministic decoding, Appendix \ref{ssec:stat-analysis} reports the results of a statistical significance analysis using paired t-tests. \textit{RLTT remains statistically significantly superior to GRPO across all math benchmarks at the 2.6B scale, and 3/5 math benchmarks on the 1.4B scale, confirming that the observed gains reflect genuine improvements in reasoning}.

Finally, Appendix \ref{ssec:alt-weight-strategy} shows that \textit{RLTT’s performance is largely insensitive to the precise loop-weighting strategy used to distribute credit}, indicating that the gains stem from exposing the reinforcement learning signal to the full latent trajectory rather than from carefully tuned credit schedules.

\begin{greenbox}
    \textbf{Takeaways. } RLTT substantially improves mathematical reasoning in LoopLMs. RLTT trains policies that reach correct solutions more robustly under strict inference constraints -- an advantage that persists across various decode budgets, loop capacities, evaluation protocols, and weighting schemas.
\end{greenbox}

\textbf{From Mathematical to General Reasoning Benchmarks.}
 We next investigate whether RLTT engenders gains which are specific to symbolic mathematics, or instead reflect a more general improvement in latent reasoning. To this end, we extend our evaluation to a set of non-mathematical reasoning benchmarks spanning logical inference, factual recall, multi-domain question answering, and coding. Importantly, these tasks are never observed during training: all models were trained exclusively on the MATH training split. 

\subsection{Non-Math Benchmarks}

\textbf{Evaluation Protocol.} We perform zero-shot evaluation on ARC-C \cite{arc}, MMLU-ST \cite{mmlu}, GPQA \cite{gpqa} and MBPP \cite{mbpp}. As in the math benchmarks, all models are evaluated without any additional fine-tuning, using deterministic decoding and exact-match answer parsing.

\textbf{Inference Budgets.} To ensure evaluation occurs within the same optimized reasoning regime learned during GRPO/RLTT training, we match the maximum generation length used and employ a 2048-token decode budget for ARC-C, MMLU-ST, GPQA, and MBPP.

\textbf{Evaluation Analysis.} Table \ref{tab:main_results} summarizes zero-shot performance across all non-mathematical benchmarks. Despite being trained exclusively under mathematical supervision, RLTT consistently outperforms both GRPO, achieving gains of \textbf{+0.7\%} on ARC-C, \textbf{+3.5\%} on MMLU-ST, \textbf{+18.7\%} on GPQA, and \textbf{+3.3\%} on MBPP on the 2.6B scale, and  \textbf{+1.2\%} on ARC-C, \textbf{+2.7\%} on MMLU-ST, \textbf{+12.1\%} on GPQA, and \textbf{+5.1\%} on MBPP on the 1.4B scale. Ouro-1.4B+RLTT reaches 64.8\% mean non-math benchmark accuracy, outperforming both Qwen3-1.7B+GRPO (51.1\%) and Qwen3-4B+GRPO (58.7\%), while Ouro-2.6B+RLTT further improves to 71.8\%, demonstrating that RLTT compares favorably to non-looped GRPO baselines on general reasoning tasks at the 2048 token budget scale.

Importantly, these gains are not artifacts of deterministic decoding. Appendix \ref{ssec:stat-analysis} reports the results a statistical significance analysis using paired t-tests. \textit{RLTT achieves statistically significant improvements over GRPO on all evaluated non-math benchmarks on both the 2.6B scale and 1.4B scale. This confirms that the improvements observed in Table \ref{tab:main_results} reflect robust differences in reasoning behavior rather than sensitivity to a particular decoding strategy}.

The most pronounced gain is observed on GPQA, where RLTT nearly doubles performance relative to GRPO. GPQA requires multi-hop factual reasoning, making it particularly sensitive to how efficiently reasoning trajectories are stabilized. GRPO frequently fails in this regime by expending the majority of the decode budget on unproductive exploration, whereas RLTT more reliably converges to a coherent solution path within the available token horizon. This mirrors the failure mode observed on AIME24/26 and BeyondAIME in the mathematical setting, reinforcing the interpretation that RLTT’s fundamental advantage stems from improved reasoning efficiency.

On ARC-C, MMLU-ST, and MBPP -- benchmarks where baseline Ouro performance is already relatively strong -- RLTT yields smaller but consistent gains. We emphasize that all improvements emerged despite the absence of any non-math supervision during training, emphasizing that RLTT modifies \textit{how} the model reasons rather than \textit{what} it learns to solve.

\begin{greenbox}
\textbf{Takeaways}. RLTT induces a transferable improvement in latent reasoning that generalizes beyond mathematics. RLTT-trained policies reach correct conclusions more efficiently in STEM domains spanning logical reasoning, factual recall, multi-domain question answering, and code generation.
\end{greenbox}

\section{Why Does RLTT Work?}

RLTT’s strong performance raises a natural question: why does it work? We argue that RLTT's improvements stem from resolving a fundamental credit-assignment mismatch between terminal-only objectives and the multi-step latent computation intrinsic to LoopLMs. This resolution manifests in two complementary ways: more token-efficient reasoning at inference time, and richer gradient signal during training.

\subsection{Token-Efficient Reasoning}
\label{ssec:token-efficent-reason}
Under terminal-only objectives such as GRPO, correctness is enforced only at the final latent state, permitting learned policies to rely on extended token-level reasoning and late-stage correction. In contrast, distributing reward across latent reasoning iterations favors policies that consistently internally converge to correct solutions, and thus don't require superfluous token-level reasoning. Formally, we show in theorem \ref{thm:shorter_length} of Appendix \ref{ssec:theoretical-analysis} that, under three reasonable assumptions:

\begin{enumerate}
    \item \textbf{Loop Refinement Decreases Uncertainty}: Let $V(\cdot)$ be any nonnegative \emph{uncertainty / variance functional} on categorical distributions (e.g., entropy or variance). Define the per-token loop uncertainty
\[
V^{(t)}_{\theta}(j) \;:=\; V\!\left(P_{\theta}^{(t)}(\cdot \mid x, y_{<j})\right) \;\ge\; 0.
\]
    We asssume for every $j$ and every context $(x,y_{<j})$, uncertainty is non-increasing with loop depth:
\[
V^{(1)}_{\theta}(j)\;\ge\;V^{(2)}_{\theta}(j)\;\ge\;\cdots\;\ge\;V^{(T_{\max})}_{\theta}(j).
\]

\item \textbf{Diminishing Returns of Extra Decoded Tokens}: Let $S:\mathbb{Z}_{\ge 0}\to[0,1]$ denote the best achievable expected terminal reward
under a maximum length constraint:
\[
S(L)
\;:=\;
\sup_{\pi_{\theta}:\ \Pr_{y\sim\pi_{\theta}(\cdot\mid x)}[L(y)\le L]=1}
\ \mathbb{E}_{y\sim\pi_{\theta}(\cdot\mid x)}[r(y)]
\]
where $r(y)\in[0,1]$ depends only on final-answer correctness.
We assume that $S(L)$ is non-decreasing and discrete concave in $L$, i.e.,
the marginal gains
\[
\Delta S(L) := S(L+1)-S(L)
\]
are non-increasing in $L$.

\item \textbf{Approximately Linear Total Uncertainty Cost}

There exist constants $c_{\textsc{grpo}}, c_{\textsc{rltt}} > 0$ such that, in expectation, each generated response incurs an uncertainty cost approximately linear in response length:

\[
\mathbb{E}\!\left[\sum_{j=1}^{L(y)} C_\theta^{\textsc{grpo}}(j)
\;\middle|\; L(y)=L\right]
\;\approx\;
c_{\textsc{grpo}} \, L
\]

\[
\mathbb{E}\!\left[\sum_{j=1}^{L(y)} C_\theta^{\textsc{rltt}}(j)
\;\middle|\; L(y)=L\right]
\;\approx\;
c_{\textsc{rltt}} \, L
\]

We further assume $c_{\textsc{rltt}} \ge c_{\textsc{grpo}}$ as $C^{\textsc{rltt}}_{\theta}(j)$ averages per-token uncertainty across all latent loops while $C^{\textsc{grpo}}_{\theta}(j)$ only uses the final-loop uncertainty.

\end{enumerate}

trajectory-level credit assignment reduces the expected number of token decoding steps required to produce a correct solution relative to terminal-only credit assignment.

\begin{redbox}
\textbf{Theorem \ref{thm:shorter_length} RLTT selects weakly smaller optimal length.}
Consider the abstract reward--cost tradeoff
\[
\max_{L \in \mathbb{Z}_{\ge 0}} \; S(L) - \phi\, c\, L,
\]
Let:
\[
L^\star_{\text{GRPO}} \in \arg\max_{L \ge 0} \bigl[S(L) - \phi\, c_{\text{GRPO}} L\bigr]
\]
\[
L^\star_{\text{RLTT}} \in \arg\max_{L \ge 0} \bigl[S(L) - \phi\, c_{\text{RLTT}} L\bigr]
\]
If $c_{\text{RLTT}} \ge c_{\text{GRPO}}$, then any optimizer satisfies
\[
L^\star_{\text{RLTT}} \le L^\star_{\text{GRPO}}.
\]

\emph{Thus, trajectory-level credit assignment induces weakly shorter optimal decoding lengths than terminal-only credit under the same reward structure.}
\end{redbox}

This theoretical result helps explain RLTT’s empirical robustness under constrained inference budgets. When decoding length is limited, policies that depend on late-stage correction are brittle as the token budget may be exhausted before the correct answer is achieved. GRPO-trained policies are thus vulnerable in restrictive inference regimes while RLTT-trained policies - due to their stronger latent reasoning and reduced decode token requirements - remain robust.

\subsection{Richer Gradient Signal}

 The same credit-assignment mechanism that improves inference efficiency also enhances learning dynamics. To quantify this, we measured Gradient Signal-to-Noise Ratio (GSNR) across MATH-500, AIME24, BeyondAIME, and GSM8K. RLTT yields statistically significant GSNR improvements on AIME24 and BeyondAIME — the hardest benchmarks where credit assignment is most challenging. On these tasks, terminal-only objectives concentrate the learning signal at the final latent state, limiting the gradient information available to shape earlier computation. By exposing the full latent thought trajectory to the reward signal, RLTT increases the density of gradient information per rollout, yielding a richer supervisory signal. Full details are provided in Appendix \ref{ssec:GSNR}.

\begin{greenbox}
\textbf{Takeaways. } RLTT improves reasoning efficiency by resolving a fundamental credit-assignment mismatch in LoopLMs. Distributing reward across latent thought trajectories leads to provably reduced decoding length, improved robustness under constrained inference budgets, and richer learning signals on challenging benchmarks.
\end{greenbox}

\section{Conclusion}
In this work, we introduce Reward Latent Thought Trajectories (RLTT), a framework which aligns reinforcement learning with the multi-step latent computation intrinsic to looped language models. By distributing credit across the entire latent reasoning trajectory, rather than bottlenecking reward at the terminal loop, RLTT provides dense, trajectory-level supervision without relying on external verifiers. Across extensive evaluations under strictly matched training and inference conditions, RLTT consistently outperforms GRPO on challenging mathematical benchmarks, while also exhibiting strong zero-shot transfer to non-mathematical reasoning tasks. Our analyses further reveal that RLTT induces earlier stabilization of correct reasoning trajectories, leading to more decisive generation, improved robustness under restrictive loop/token budgets, and richer gradient signals precisely in the most difficult reasoning regimes.

At the same time, RLTT introduces practical trade-offs. Retaining per-loop log-probabilities increases memory footprint, limiting per-GPU token packing, and the method remains specialized to looped architectures rather than standard non-looped architectures. In addition, our experiments use a fixed loop depth during training and inference, which sacrifices Ouro’s native ability to adaptively choose when to early-exit and allocate fewer (or more) latent iterations per token based on input difficulty. Future work will explore memory-efficient implementations and extensions that integrate adaptive halting to recover per-token compute allocation while preserving trajectory-level credit assignment.

\section*{Impact Statement}

This work aims to advance the field of machine learning by improving how reinforcement learning aligns with latent reasoning in looped language models. While the techniques proposed may contribute to more capable and efficient reasoning systems, we do not foresee any immediate negative societal impacts beyond those commonly associated with general-purpose language models.

\section*{Acknowledgments}

We would like to thank the Princeton President’s Fellowship and the GEM Fellowship for supporting this work.


\nocite{*}

\bibliography{refs}
\bibliographystyle{icml2026}

\newpage
\appendix
\onecolumn
\section{Appendix}

\noindent\textbf{Appendix Roadmap:}
\begin{itemize}
  \item \textbf{\ref{ssec:budget-robust}} Decode-Token Budget Robustness.
  \item \textbf{\ref{ssec:stat-analysis}} Statistical Significance Tests.
  \item \textbf{\ref{ssec:alt-weight-strategy}} Sensitivity to RLTT Loop-Weighting Strategy.
  \item \textbf{\ref{ssec:pass-at-k_analysis}} Pass@k Analysis
  \item \textbf{\ref{ssec:experimental-details}} Further experimental details.
  \item \textbf{\ref{ssec:per-loop-performance}} Loop-level Performance Analysis.
  \item \textbf{\ref{ssec:GSNR}} GSNR Details.
  \item \textbf{\ref{ssec:inference-cost}} Inference Compute Cost Analysis.
  \item \textbf{\ref{ssec:cka-analysis}} Centered Kernel Alignment (CKA) Analysis.
  \item \textbf{\ref{ssec:theoretical-analysis}} Theoretical Analysis.
  \item \textbf{\ref{ssec:qualitative-examples}} Qualitative Examples.
\end{itemize}
\clearpage


\subsection{Analysis of Decode Token Budget Robustness}
\label{ssec:budget-robust}

\begin{table}[h]
\centering
\small
\begin{tabular}{lccccc}
\toprule
\textbf{Method}  & \textbf{1024 tokens} & \textbf{2048 tokens} & \textbf{3072 tokens} & \textbf{4096 tokens}  \\
\midrule
Ouro2.6B-Thinking & 38.4  & 67.8  & 75.6 & 79.6  \\
+ SFT &  34.6  & 58.2 & 71.8 & 77.6  \\
+ GRPO &  \underline{42.4}  & \underline{71.6} & \underline{79.0} & \underline{80.8}  \\
+ RLTT &  \textbf{78.4} & \textbf{86.0} & \textbf{87.4}  & \textbf{89.8} \\
\bottomrule
\end{tabular}
\caption{Decode-budget robustness investigation on MATH-500.}
\label{tab:math500_decode_budget}
\end{table}

This ablation evaluates whether performance gains from RLTT persist under both tighter and looser inference-time compute budgets. We restrict evaluation to a maximum of 4096 tokens, as this regime already exceeds the training-time rollout horizon by 2x and suffices to test out-of-distribution inference robustness. Across the full range of decode budgets, RLTT consistently outperforms GRPO, demonstrating robustness at both extremes of inference-time constraint. Under a heavily constrained 1024-token budget, RLTT retains high MATH-500 accuracy (78.4\%), whereas GRPO and SFT degrade sharply, indicating strong sensitivity to truncation. This regime highlights RLTT’s ability to make decisive progress toward correct solutions early in generation. In contrast, GRPO appears to rely more heavily on extended token-level reasoning traces, which become brittle when inference budgets are restricted.

Importantly, RLTT’s advantage is not confined to tight-budget settings. Even under a much looser 4096-token budget—far exceeding the 2048-token horizon used during training—RLTT continues to achieve higher accuracy than GRPO. This demonstrates that RLTT’s gains do not arise from overfitting to a particular decode length or training-specific budget, but instead reflect more effective reasoning policies that generalize beyond the training distribution. While the absolute performance gap narrows as GRPO is given sufficient length to amortize its reasoning over longer outputs, RLTT maintains a consistent advantage without requiring additional inference compute.

Overall, these results indicate that RLTT learns policies that are not only more accurate, but also more robust across a wide spectrum of inference-time constraints, from highly restrictive deployment settings to permissive regimes well outside the training distribution. We restrict this analysis to MATH-500, as it is the only math benchmark in our suite where multiple decode budgets are simultaneously valid; for AIME24 and BeyondAIME, shorter budgets lead to near-zero accuracy across all methods due to truncation, while GSM8K performance saturates well below 1024 tokens.

\newpage

\subsection{Statistical Significance Analysis of RLTT Improvements.}
\label{ssec:stat-analysis}

In our main experiments, we follow standard practice and report deterministic decoding results for Pass@1 evaluation. However, deterministic decoding does not admit variance-based statistical significance testing.

To assess the robustness and statistical significance of the performance gains achieved by RLTT over GRPO, we conduct additional sampling-based evaluations that introduce stochasticity at inference time. Specifically, we evaluate both methods using a fixed temperature of $T = 0.2$, while keeping all other evaluation conditions identical across methods. For each benchmark problem, we generate multiple independent samples using 10 different random seeds and compute the average accuracy per task. To highlight RLTT's token efficiency, we permit extended token budgets for GRPO of 3072 on MATH-500/GSM8K compared to RLTT's 2048/512.

We then perform paired t-tests between RLTT and GRPO using matched samples from the same set of questions, following prior work \cite{hrpo}. Table~\ref{tab:rltt_significance} reports the averaged results on both math and non-math benchmarks, with statistically significant improvements ($p < 0.05$) highlighted in bold.

\begin{table}[h]
\centering
\setlength{\tabcolsep}{2.4pt}
\small
\begin{tabular}{llccccc|cccc}
\toprule
& & \multicolumn{5}{c|}{Math Benchmarks} 
& \multicolumn{4}{c}{Non-Math Benchmarks} \\
\cmidrule(lr){3-7} \cmidrule(lr){8-11}
Model & Method 
& MATH-500 & AIME24 & AIME26 & BeyondAIME & GSM8K 
& ARC-C & MMLU-ST & GPQA & MBPP \\
\midrule
\multirow{2}{*}{Ouro1.4B-Thinking}
& GRPO 
& 76.9$\pm$0.8 & 16.0$\pm$2.6 & 14.3$\pm$3.9 & 7.00$\pm$1.8 & \textbf{92.4$\pm$0.3} 
& 91.0$\pm$0.4  & 80.9$\pm$0.4 & 13.4$\pm$1.3 & 60.9$\pm$0.8 \\
& RLTT 
& \textbf{81.5$\pm$0.7} & \textbf{20.7$\pm$5.4} & 17.3$\pm$4.7 & \textbf{11.3$\pm$1.8} & 90.8$\pm$0.4 
& \textbf{91.8$\pm$0.5}  & \textbf{84.4$\pm$0.2} & \textbf{26.9$\pm$1.6} & \textbf{64.8$\pm$0.8} \\
\midrule
\multirow{2}{*}{Ouro2.6B-Thinking}
& GRPO 
& 79.1$\pm$0.7 & 17.7$\pm$2.3 &  14.7$\pm$3.6 & 9.30$\pm$1.9 & 94.5$\pm$0.3 
& 93.5$\pm$0.4 & 86.0$\pm$0.3 & 19.7$\pm$2.0 & 68.0$\pm$1.3 \\
& RLTT 
& \textbf{85.9$\pm$1.0} & \textbf{32.3$\pm$4.2} & \textbf{27.0$\pm$4.0} & \textbf{16.4$\pm$2.4} & 94.4$\pm$0.4 
& \textbf{94.3$\pm$0.3} & \textbf{89.9$\pm$0.3} & \textbf{38.2$\pm$1.9} & \textbf{73.6$\pm$0.6} \\
\bottomrule
\end{tabular}
\caption{
Statistical significance analysis of RLTT performance improvements over GRPO under sampling-based evaluation.
Bold indicates statistically significant improvement over GRPO of the same model size
($p < 0.05$, paired $t$-test).
}
\label{tab:rltt_significance}
\end{table}

RLTT achieves statistically significant gains over GRPO on 8/9 benchmarks on the 2.6B scale, and 7/9 benchmarks on the 1.4B scale. These findings indicate that the improvements observed in Pass@1 evaluation are not artifacts of deterministic decoding, but persist under stochastic inference. These results provide additional evidence that RLTT delivers robust and reliable performance improvements over GRPO.

\newpage

\subsection{Analysis of RLTT Weighting Strategy}
\label{ssec:alt-weight-strategy}

\begin{table}[h]
\begin{center}
\footnotesize
\setlength{\tabcolsep}{4pt}
\begin{tabular}{lcccc|cccc}
\toprule
& \multicolumn{4}{c|}{\textbf{Math Benchmarks}} 
& \multicolumn{3}{c}{\textbf{Non-Math Benchmarks}} \\
\cmidrule(lr){2-5} \cmidrule(lr){6-9}
\textbf{Weighting Strategy}
& \textbf{MATH-500} 
& \textbf{AIME24} 
& \textbf{BeyondAIME} 
& \textbf{GSM8K} 
& \textbf{ARC-C} 
& \textbf{MMLU-ST} 
& \textbf{GPQA} 
& \textbf{MBPP} \\
\midrule
Uniform 
& 85.0 & \textbf{36.7} & \textbf{17.0} & \textbf{94.8} & \underline{94.2} & \textbf{90.4} & \textbf{45.5} & \underline{59.1} \\

Progressive ($\alpha=1$) 
& \textbf{86.8} & 30.0 & \underline{16.0} & \textbf{94.8} & \underline{94.2} & \textbf{90.4} & \underline{41.9} & \textbf{64.6} \\

Exit-probability 
& \underline{86.0} & \underline{33.3} & \underline{16.0} & \underline{94.0} & \textbf{94.4} & \underline{89.6} & 38.4 & \textbf{64.6} \\
\bottomrule
\end{tabular}
\end{center}
\caption{Sensitivity of RLTT performance to loop-weighting strategy across reasoning benchmarks.}
\label{tab:rltt_weighting_sensitivity_all}
\end{table}

\begin{figure}[h]
    \centering
    \begin{subfigure}[t]{0.48\linewidth}
        \centering
        \includegraphics[width=\linewidth]{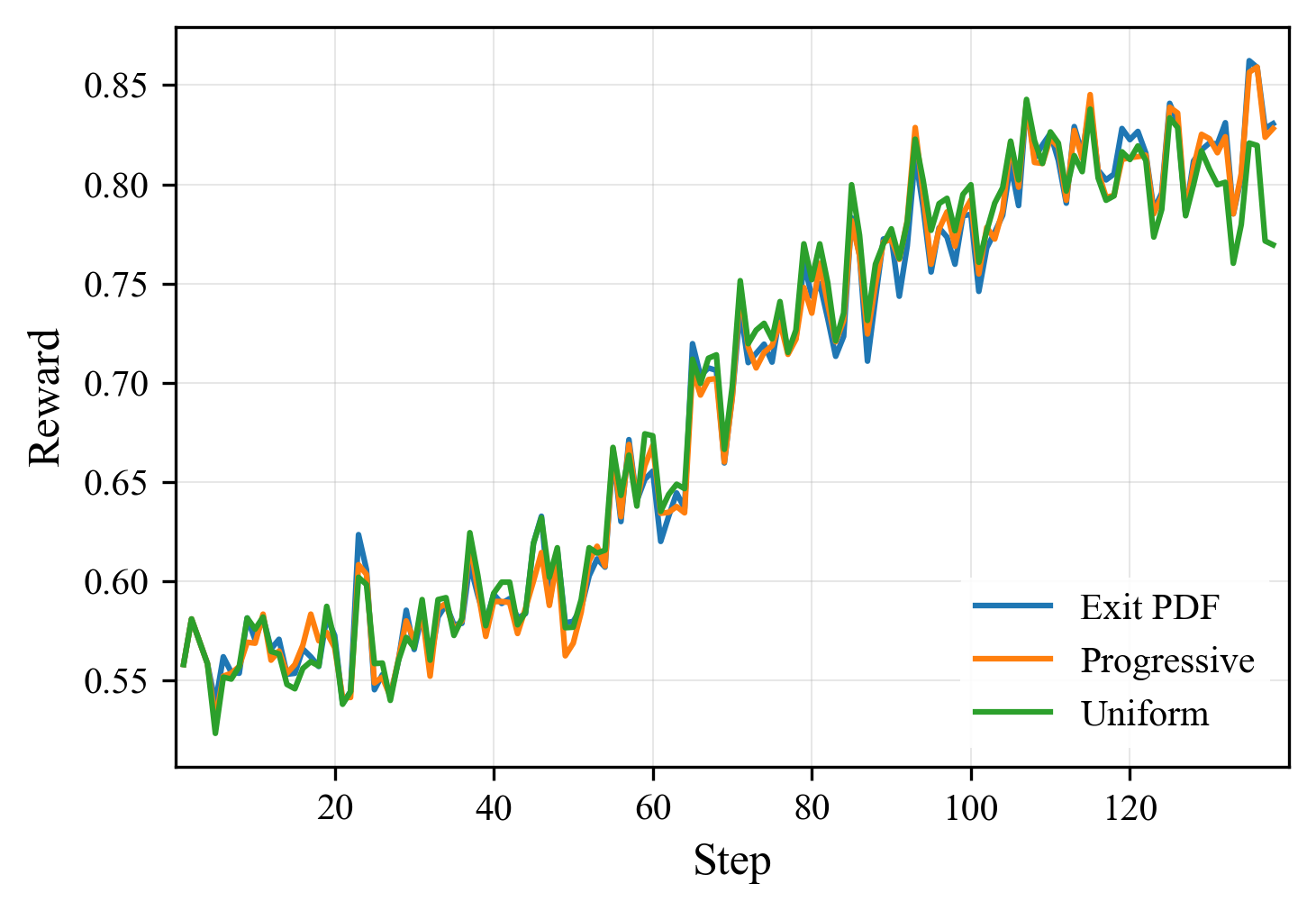}
        \caption{Reward over training steps}
        \label{fig:weight-method_rewards-over-time}
    \end{subfigure}
    \hfill
    \begin{subfigure}[t]{0.48\linewidth}
        \centering
        \includegraphics[width=\linewidth]{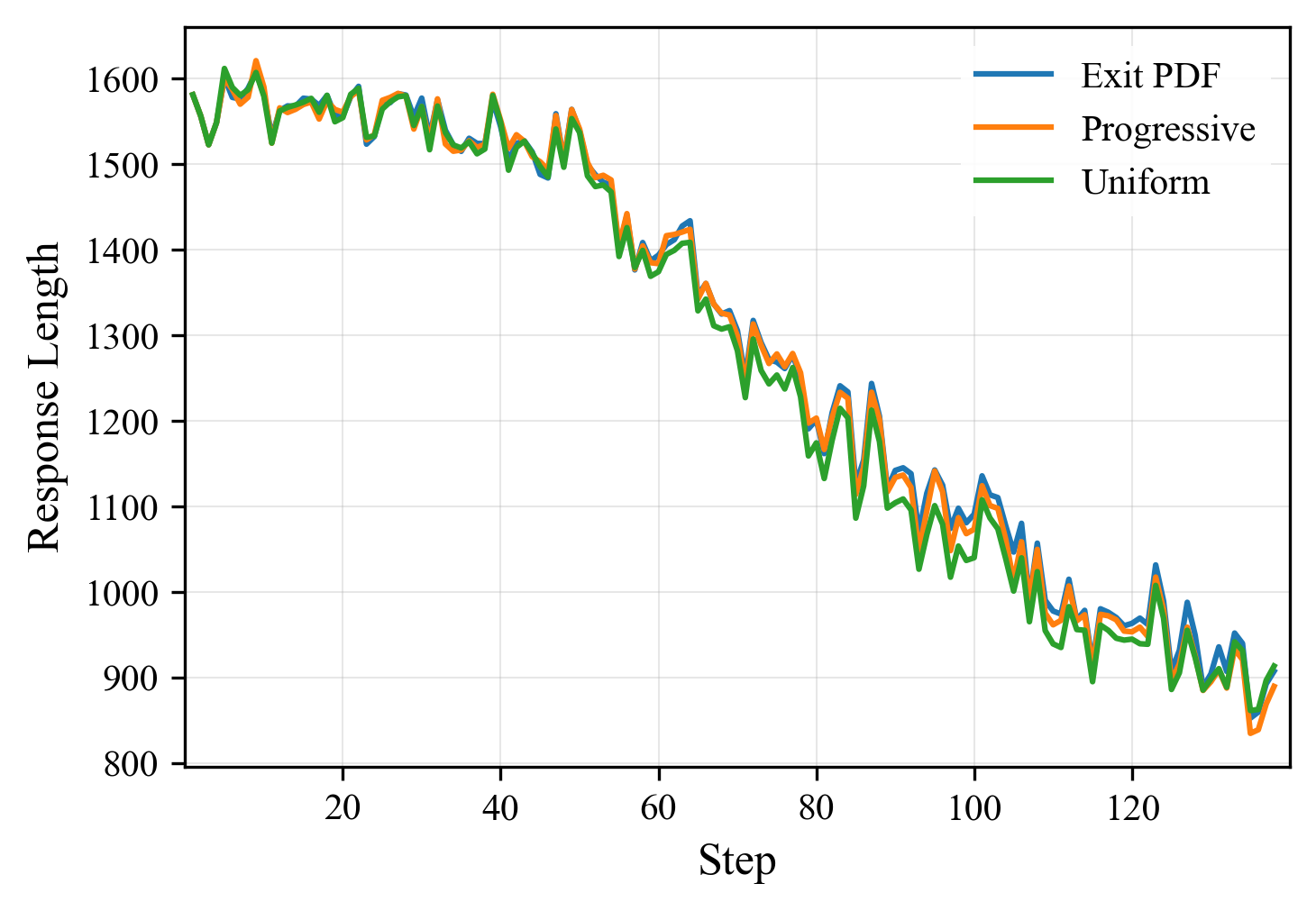}
        \caption{Completion length over training steps}
        \label{fig:weight-method_response_length-over-time}
    \end{subfigure}

    \caption{Comparison of RLTT weight methods on MATH training dynamics.}
    \label{fig:weight-methods-side-by-side}
\end{figure}

Table \ref{tab:rltt_weighting_sensitivity_all} and Figure \ref{fig:weight-methods-side-by-side} analyze the sensitivity of RLTT to the choice of loop-weighting strategy used to distribute credit across latent thought iterations. We compare three natural weighting schemes—uniform, progressive, and exit-probability weighting—across both mathematical and non-mathematical reasoning benchmarks, holding all other training and evaluation conditions fixed.

A key observation is that performance differences across weighting strategies are consistently modest. Across all benchmarks, the variation induced by changing the weighting scheme is small relative to the performance gap between RLTT and GRPO. This indicates that RLTT's gains are not driven by a carefully tuned credit allocation schedule, but instead arise from the more fundamental change of exposing the reinforcement learning signal to the entire latent reasoning trajectory. Once credit is no longer bottlenecked through the terminal loop alone, the precise distribution of that credit across loops becomes a second-order consideration. \textit{This robustness is an important practical property, as it suggests that practitioners need not perform delicate hyperparameter tuning over latent credit allocation in order to benefit from trajectory-level reinforcement learning.}

\newpage
\subsection{Pass@k Analysis}
\label{ssec:pass-at-k_analysis}

To assess whether RLTT's lower entropy reflects controlled confidence rather than collapse, we analyze Pass@k performance under stochastic sampling with a fixed temperature of $T = 0.6$ and identical token decode budgets (2048 for MATH-500, 3072 for AIME24/BeyondAIME, and 512 for GSM8K).

\begin{figure}[h]
    \centering
    \includegraphics[width=0.9\linewidth]{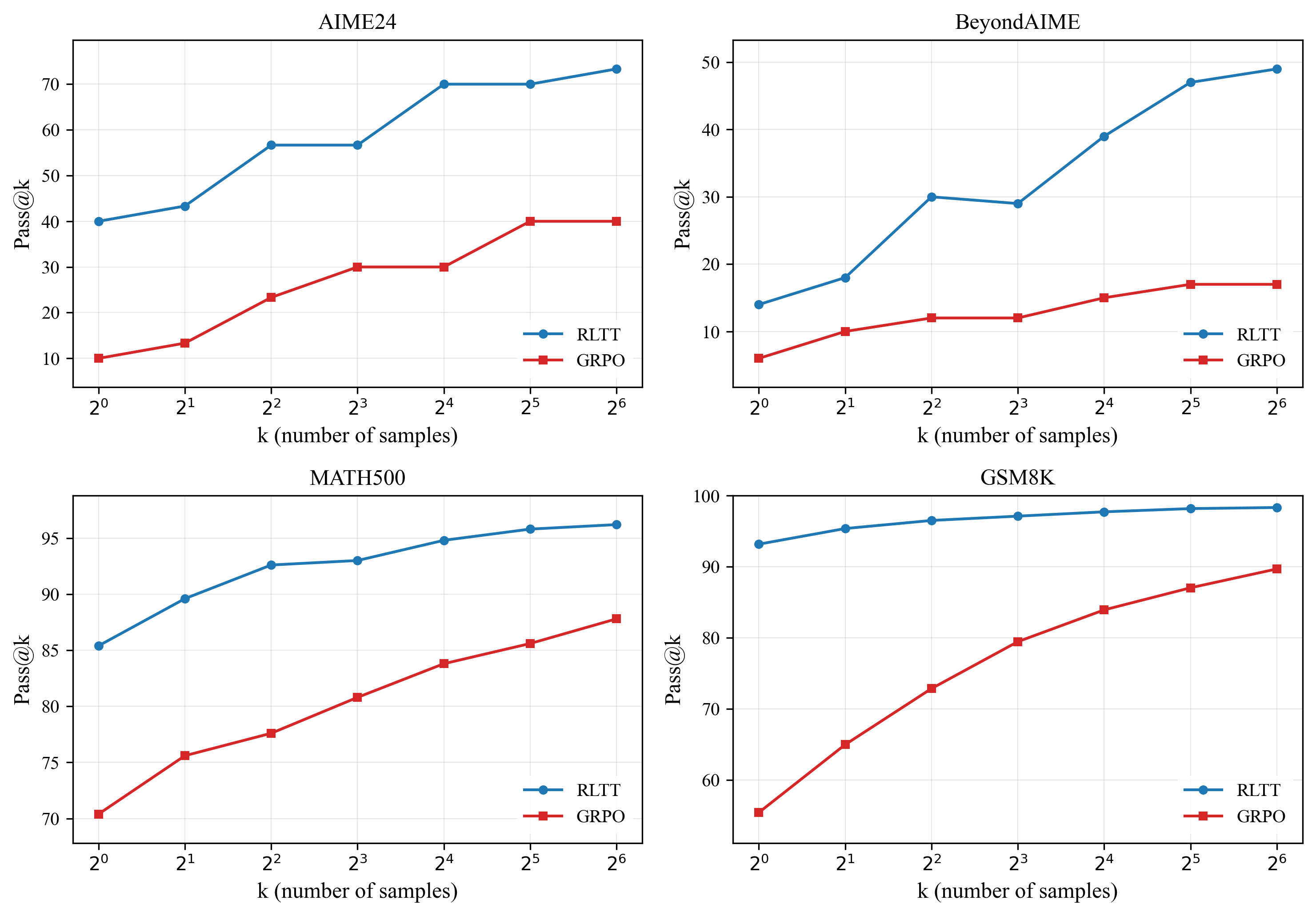}
    \caption{Pass@k evaluation under sampling temperature $T=0.6$. Results compare GRPO and RLTT performance across all benchmarks as the number of sampled solutions $k$ increases.}
    \label{fig:pass_at_k_benchmarks}
\end{figure}

Figure \ref{fig:pass_at_k_benchmarks} reports Pass@k accuracy as k increases. Across all benchmarks, RLTT exhibits consistently steeper scaling than GRPO, indicating its gains arise from a richer set of viable reasoning paths rather than a single dominant mode.

This effect is most pronounced on AIME24 and BeyondAIME — the hardest benchmarks where GRPO shows limited improvement with additional samples. RLTT continues accruing substantial gains at higher k, implying a larger fraction of sampled solutions correspond to coherent reasoning processes. On MATH-500, RLTT maintains clear advantages across all k values, while GSM8K saturates rapidly for both methods due to near-ceiling performance.

These trends corroborate the entropy dynamics in \ref{sssec:entropy-over-time}: although RLTT exhibits lower output entropy, Pass@k scaling demonstrates this reflects controlled confidence rather than premature collapse. The model becomes more decisive earlier in generation while retaining sufficient stochasticity for effective sampling-based inference. This provides additional evidence that trajectory-level credit assignment concentrates probability mass on correct solution manifolds without eliminating diversity.

\newpage

\subsection{Further Experimental Details}
\label{ssec:experimental-details}

\subsubsection{Training Details}

We use the VERL framework to enable distributed training over 4 H200 140GB GPUs. All Ouro models are trained with full parameter updates and are initialized from the same Ouro-2.6B-Thinking checkpoint. vLLM is used for rollout acceleration. 

\begin{table}[h]
\centering
\label{tab:hrpo_hparams}
\begin{tabular}{l c}
\toprule
Algorithm & GRPO/RLTT \\
Loop Iterations & 4 \\
Steps & 140 \\
Prompt Batch Size & 32 \\
\# Rollouts per Prompt & 8 \\
Max Generation Length & 2048 \\
Learning Rate & $1 \times 10^{-6}$ \\
LR Scheduler & Constant with Warmup \\
Optimizer & AdamW 8bit \\
Optimizer Momentum & $\beta_1, \beta_2 = 0.9, 0.99$ \\
KL Coefficient & $1 \times 10^{-3}$ \\
Weight Decay & 0.1 \\
Max Gradient Norm & 0.1 \\
Gradient Accumulation Step & 2 \\
Warmup Ratio & 0.1 \\
Precision (WA) & BF16 \\
Reward Formulation & Binary 0-1 \\
Advantage Estimation & Group Normalized \\
\bottomrule
\end{tabular}
\caption{Ouro Experimental Hyperparameters.}
\end{table}

\subsubsection{Math Benchmark Evaluations}

MathVerify is used for answer parsing.

\begin{table}[h]
\centering
\label{tab:prompt_completion_lengths}
\begin{tabular}{l c}
\toprule
Prompt / Completion Length for MATH500 & 1024 / 2048 | 3072 \\
Prompt / Completion Length for AIME24 & 1024 / 3072 \\
Prompt / Completion Length for BeyondAIME & 1024 / 3072 \\
Prompt / Completion Length for GSM8K  & 512 / 512 | 3072 \\
\bottomrule
\end{tabular}
\caption{Experiment prompt / completion lengths.}
\end{table}

\begin{promptbox}[Ouro Prompt for Math Evaluations]
<|im_start|>system
You are a helpful assistant.<|im_end|>
<|im_start|>user

Solve the following math problem. Show your reasoning step by step.
Put your final answer in \boxed{}.  
Once you provide the final answer, stop immediately.

Problem: <problem text here><|im_end|>
<|im_start|>assistant
\end{promptbox}

\newpage

\subsubsection{Non-Math Experiments}

Pass@1 evaluation with a 5 second timeout is used for MBPP.

\begin{table}[h]
\centering
\label{tab:prompt_completion_lengths}
\begin{tabular}{l c}
\toprule
Prompt / Completion Length for ARC-C  & 512 / 2048 \\
Prompt / Completion Length for MMLU-ST  & 512 / 2048 \\
Prompt / Completion Length for GPQA  & 3072 / 2048 \\
Prompt / Completion Length for MBPP  & 4096 / 2048 \\
\bottomrule
\end{tabular}
\caption{Experiment prompt / completion lengths.}
\end{table}

\begin{promptbox}[Ouro Prompt for ARC-C, MMLU-ST, and GPQA Evaluations]
<|im_start|>system
You are a helpful assistant that answers multiple choice questions. 
Think through the problem step by step, 
then give your final answer as a single letter (A, B, C, or D).<|im_end|>
<|im_start|>user
Question: {question}

A. {choice_A}
B. {choice_B}
C. {choice_C}
D. {choice_D}

Think step by step and then provide your answer. 
Put your final answer in \boxed{}. 
Once you provide the final answer, stop immediately.<|im_end|>
<|im_start|>assistant
\end{promptbox}

\begin{promptbox}[Ouro Prompt for MBPP Evaluation]
<|im_start|>system
You are an expert Python programmer. 
Write a Python function that solves the given task. 
Make sure your code passes all the provided test cases.<|im_end|>
<|im_start|>user
Task: {prompt}

Test cases that your solution should pass:
{test_str}

Write a Python function that solves this task. 
Put your solution in a Python code block.<|im_end|>
<|im_start|>assistant
\end{promptbox}

\subsubsection{Qwen3 GRPO Baselines}

To address reviewer requests for additional non-looped RL baselines, we train Qwen3+GRPO models at the 1.7B and 4B parameter scales. These baselines test whether standard GRPO post-training on a conventional decoder-only architecture is sufficient to match the improvements obtained by applying trajectory-level credit assignment to LoopLMs.

Both Qwen3 models are trained on the same MATH training data used in the Ouro experiments. We use a learning rate of $1 \times 10^{-4}$ for Qwen3+GRPO, with all other experimental settings matched to the GRPO/RLTT protocol described above, including rollout budget, reward formulation, advantage normalization, optimization length, and evaluation procedure. Because Qwen3 does not expose a loop-wise latent thought trajectory, RLTT is not applicable to these baselines; GRPO therefore serves as the corresponding RLVR comparison.

\newpage

\subsection{Analysis of Loop-Level Performance}
\label{ssec:per-loop-performance}

\begin{table}[h]
\centering
\small
\begin{tabular}{llcccc}
\toprule
\textbf{Dataset} & \textbf{Method} & \textbf{1 Loop} & \textbf{2 Loops} & \textbf{3 Loops} & \textbf{4 Loops} \\
\midrule
\multirow{3}{*}{MATH-500}
 & SFT  & 27.4 & 61.8 & 63.4 & 58.2 \\
 & GRPO & 32.4 & 66.2 & 70.4 & 71.6 \\
 & RLTT & \textbf{37.4} & \textbf{81.2} & \textbf{84.8} & \textbf{86.0} \\
\midrule

\multirow{3}{*}{AIME24}
 & SFT  & 0.00 & 6.67 & 13.3 & 13.3 \\
 & GRPO & 0.00 & 10.0 & 16.7 & 16.7 \\
 & RLTT & 0.00 & \textbf{23.3} & \textbf{26.7} & \textbf{33.3} \\
\midrule
 
\multirow{3}{*}{BeyondAIME}
 & SFT  & 0.00 & 4.00 & 4.00 & 6.00 \\
 & GRPO & 0.00 & 2.00 & 6.00 & 6.00 \\
 & RLTT & 0.00 & \textbf{11.0} & \textbf{15.0} & \textbf{16.0} \\
\midrule

\multirow{3}{*}{GSM8K}
 & SFT  & 32.4 & 60.7 & 62.2 & 59.6 \\
 & GRPO & 33.2 & 58.4 & 62.4 & 59.7 \\
 & RLTT & \textbf{59.4} & \textbf{89.9} & \textbf{93.1} & \textbf{94.0} \\
\bottomrule
\end{tabular}
\caption{Per-loop evaluations on MATH-500, GSM8K, AIME24, and BeyondAIME. RLTT and GRPO are evaluated while varying the maximum number of reasoning loops from 1 to 4. RLTT consistently outperforms GRPO.}
\label{tab:per_loop_eval}
\end{table}

\textbf{Loop-level Analysis}.

Table \ref{tab:per_loop_eval} compares GRPO and RLTT while varying the number of reasoning loops at evaluation time across MATH-500, GSM8K, AIME24, and BeyondAIME. Across all benchmarks, RLTT consistently outperforms GRPO at every loop count for which non-zero accuracy is achieved.

On MATH-500, RLTT improves 1-loop accuracy by +5.0 points over GRPO and maintains a widening advantage as additional loops are enabled, reaching a +14.4 point improvement at 2 loops and a +14.4 point improvement at 4 loops. These gains indicate that RLTT benefits not only from increased loop capacity but also from more effective utilization of early reasoning iterations.

A similar pattern emerges on GSM8K, where RLTT exhibits particularly strong gains under constrained loop budgets. At 1 loop, RLTT outperforms GRPO by +26.2 points, and this gap further widens to +31.5 points at 2 loops . Even at higher loop counts, RLTT maintains substantial margins, achieving +30.7 and +34.3 point improvements at 3 and 4 loops, respectively. These results suggest that RLTT induces significantly more effective reasoning trajectories when loop capacity is limited.

For the more challenging benchmarks AIME24 and BeyondAIME, both methods achieve zero accuracy at 1 loop, rendering this regime uninformative. However, once additional loops are enabled, RLTT diverges sharply from GRPO. On AIME24, RLTT surpasses GRPO by +13.3 points at 2 loops, +10.0 points at 3 loops, and +16.6 points at 4 loops . On BeyondAIME, RLTT achieves gains of +9.0, +9.0, and +10.0 points at 2, 3, and 4 loops, respectively, reflecting consistent improvements as loop depth increases.

\newpage

\subsection{Analysis of Gradient Signal to Noise Ratio}
\label{ssec:GSNR}

Taking inspiration from \cite{gsnr} and \cite{negative-rlr}, we measure the quality of the learning signal produced by the GRPO and RLTT objectives using the gradient signal-to-noise ratio (GSNR). We compute GSNR with respect to the model's final loop latent-thought logits: $ z^{(T_{\max})}_{j}$, where
$P^{(T_{\max})}_{\theta}(y_j \mid x, y_{<j}) = \mathrm{Softmax}(z^{(T_{\max})}_{j})$.

For a given prompt $p$, we sample $R=8$ independent responses $\{r_1,\dots,r_R\}$ using the current policy. Each response is graded to compute advantages, which are then used to form the GRPO or RLTT loss $\mathcal{L}_{p,r}$ for rollout $r$.

For each rollout, we compute the gradient of the loss with respect to all latent-thought logits used in that rollout:
\[
g_{p,r} = \nabla_{z_{p,r}} \mathcal{L}_{p,r},
\]
where $z_{p,r}$ denotes the $r$th latent thought logit, in the $p$th response generation.

To quantify the consistency of the learning signal across rollouts for a fixed prompt, we compute the mean gradient
\[
\mu_p = \frac{1}{R} \sum_{r=1}^R g_{p,r},
\]
and the corresponding noise
\[
\mathrm{noise}_p
= \frac{1}{R-1} \sum_{r=1}^R \left\| g_{p,r} - \mu_p \right\|_2^2.
\]

We define the prompt-level gradient signal-to-noise ratio as
\[
\mathrm{GSNR}_p
= \frac{\left\| \mu_p \right\|_2^2}{\mathrm{noise}_p + \epsilon},
\]

where $\epsilon = 1.0 \times 10^{-8}$ is a small constant for numerical stability. This procedure is repeated over all $N$ prompts from each evaluation benchmark, producing prompt-level GSNR values $\{\mathrm{GSNR}_p\}_{p=1}^N$. We report the overall GSNR as the mean of the log-transformed prompt-level scores:
\[
\mathrm{GSNR}
= \frac{1}{N} \sum_{p=1}^N \log\left( \mathrm{GSNR}_p + \epsilon \right),
\]
which provides a stable aggregate measure of gradient signal quality. Table \ref{tab:gsnr_results} reports GSNR for both GRPO and RLTT over all math evaluation benchmarks. A noticeable pattern emerges: RLTT yields statistically significant GSNR improvements on the hardest benchmarks - AIME24 and BeyondAIME - while differences on MATH-500 are modest and GSM8K exhibits a statistically significant decrease.

\begin{table}[h]
\centering
\begin{tabular}{llc}
\toprule
\textbf{Dataset} & \textbf{Method} & \textbf{GSNR} \\
\midrule
\multirow{2}{*}{MATH-500}
 & GRPO  &   -14.7 \\
 & RLTT  &  -15.3  \\
\midrule
\multirow{2}{*}{AIME24}
 & GRPO  &   -15.1 \\
 & RLTT  &  \textbf{-11.8} \\
\midrule 
\multirow{2}{*}{BeyondAIME}
 & GRPO  &  -16.8 \\
 & RLTT  & \textbf{-13.8}  \\
\midrule
\multirow{2}{*}{GSM8K}
 & GRPO  &  \textbf{-12.0} \\
 & RLTT  & -17.0 \\
\bottomrule
\end{tabular}
\caption{Gradient Signal-to-Noise Ratio (GSNR) at the latent-thought level. Statistically significant improvements ($p < 0.05$, paired t-test) are shown in \textbf{bold}.}
\label{tab:gsnr_results}
\end{table} 

These GSNR trends are consistent with how RLTT reshapes the optimization signal. On the most challenging benchmarks AIME24 and BeyondAIME, correct solutions are rare and rewards are correspondingly sparse. Under terminal-only credit assignment, the learning signal is concentrated at the final latent state, limiting the amount of gradient signal available to shape earlier internal computation. By distributing credit across the latent thought trajectory, RLTT exposes a larger fraction of the model’s internal reasoning process to the reward signal, effectively increasing the density of gradient information available per rollout. This manifests as significantly higher GSNR for RLTT on these benchmarks, indicating a more informative and coherent learning signal.

On MATH-500, GSNR differences between RLTT and GRPO are small and not statistically significant. This is consistent with the intermediate difficulty of the task: rewards are sufficiently informative for both objectives to produce usable gradients, but not so sparse that credit assignment becomes the dominant optimization bottleneck. In this regime, RLTT neither substantially improves nor degrades gradient consistency relative to GRPO.

GSM8K exhibits a different behavior. RLTT achieves substantially higher accuracy than GRPO on this benchmark, and Table \ref{tab:gsnr_results} shows a corresponding decrease in GSNR. This reduction does not indicate degraded optimization. Rather, it reflects gradient saturation: once the policy reliably produces correct solutions, gradients become small, reducing the signal-to-noise ratio despite stable training. In this regime, GSNR is no longer a proxy for optimization difficulty, but instead reflects task mastery.

\newpage

\subsection{Inference Compute Cost Analysis}
\label{ssec:inference-cost}

We additionally analyze the inference-time compute trade-off introduced by looped latent reasoning. Because Ouro performs multiple internal refinement steps before emitting each token, its inference cost is higher than that of a conventional decoder-only transformer with a comparable parameter count. This section quantifies that cost on AIME26 and relates it to the corresponding accuracy gains.

\begin{table}[h]
\centering
\caption{
Inference compute trade-off on AIME26. FLOPs multipliers report the normalized inference FLOPs of Ouro-2.6B-Thinking with four loop iterations relative to the corresponding Qwen3 baseline under the same AIME26 evaluation setting. Accuracies are deterministic Pass@1 results.
}
\label{tab:aime26_inference_flops}
\begin{tabular}{lccc}
\toprule
Comparison & Ouro FLOPs Multiplier & Ouro+RLTT AIME26 & Qwen+GRPO AIME26 \\
\midrule
Ouro-2.6B vs. Qwen3-1.7B & $6.75\times$ & $26.7\%$ & $0.00\%$ \\
Ouro-2.6B vs. Qwen3-4B   & $2.63\times$ & $26.7\%$ & $3.33\%$ \\
\bottomrule
\end{tabular}
\end{table}

Table~\ref{tab:aime26_inference_flops} shows that Ouro+RLTT incurs substantially higher per-token FLOPs than the non-looped Qwen3 baselines. Relative to Qwen3-1.7B, Ouro-2.6B requires $6.75\times$ more inference FLOPs per token; relative to Qwen3-4B, it requires $2.63\times$ more inference FLOPs. This increased cost is expected: even though Ouro has fewer parameters than Qwen3-4B, each emitted token is produced after four latent loop iterations, which scales the recurrent transformer computation by approximately the loop count.

However, the added latent compute yields a large improvement on the most difficult AIME-style setting. Ouro+RLTT achieves $26.7\%$ accuracy on AIME26, compared to $0.0\%$ for Qwen3-1.7B+GRPO and $3.33\%$ for Qwen3-4B+GRPO. Thus, the comparison should be interpreted as a compute--accuracy trade-off rather than a pure efficiency comparison: Ouro+RLTT spends more computation per output token, but converts that compute into substantially stronger reasoning performance.

\newpage

\subsection{Centered Kernel Alignment}
\label{ssec:cka-analysis}

A potential failure mode of trajectory-level supervision is overly early convergence: because RLTT directly supervises intermediate loop states, it might force early-loop representations to collapse prematurely toward the final-loop representation, eliminating useful later refinement. We test for this failure mode using a Centered Kernel Alignment (CKA) analysis of latent representations across loop iterations \cite{cka}.

For each method, we evaluate the same set of $N=30$ AIME26 math prompts with exactly four loop iterations. For prompt $e$, let $z_e^{(t)} \in \mathbb{R}^d$ denote the hidden state of the last prompt token after loop $t$. We stack these vectors across prompts to form a dataset-level representation matrix
\[
Z^{(t)} =
\begin{bmatrix}
(z_1^{(t)})^\top \\
\vdots \\
(z_N^{(t)})^\top
\end{bmatrix}
\in \mathbb{R}^{N \times d}.
\]
After centering each $Z^{(t)}$ across prompts, we compute linear CKA between loop representations:
\[
\mathrm{CKA}(Z^{(a)}, Z^{(b)}) =
\frac{
\left\|(\tilde Z^{(a)})^\top \tilde Z^{(b)}\right\|_F^2
}{
\left\|(\tilde Z^{(a)})^\top \tilde Z^{(a)}\right\|_F
\left\|(\tilde Z^{(b)})^\top \tilde Z^{(b)}\right\|_F
},
\]
where $\tilde Z^{(t)}$ denotes the centered representation matrix. We then measure convergence to the terminal loop by computing
\[
c_t = \mathrm{CKA}(Z^{(t)}, Z^{(4)}), \qquad t \in \{1,2,3\}.
\]

\begin{table}[h]
\centering
\caption{
Convergence-to-final CKA analysis across loop iterations. Values report the similarity between each intermediate-loop last-token representation and the final loop-4 representation, computed across 30 math prompts.
}
\label{tab:cka_convergence}
\begin{tabular}{lcccc}
\toprule
Method & $\mathrm{CKA}(1,4)$ & $\mathrm{CKA}(2,4)$ & $\mathrm{CKA}(3,4)$ & Monotonic? \\
\midrule
RLTT & $0.849$ & $0.869$ & $0.978$ & Yes \\
GRPO & $0.836$ & $0.788$ & $0.974$ & No \\
\bottomrule
\end{tabular}
\end{table}

Table~\ref{tab:cka_convergence} argues against the overly early convergence failure mode. Under RLTT, similarity to the final-loop representation increases monotonically from loop 1 to loop 3. This indicates that intermediate latent states move progressively toward the terminal representation rather than collapsing immediately to it. The increase from loop 2 to loop 3 further suggests that later loop iterations still perform meaningful refinement.

In contrast, GRPO exhibits non-monotonic convergence: similarity to the final-loop representation decreases from loop 1 to loop 2 before recovering at loop 3. This suggests that terminal-only supervision permits intermediate loop states to drift away from the final representational direction before late correction. RLTT therefore appears to stabilize the latent refinement trajectory without eliminating the role of later computation.

Together with the loop-level evaluation in Appendix~\ref{ssec:per-loop-performance}, these results indicate that trajectory-level supervision improves the usefulness of early latent computation rather than merely collapsing the model toward premature convergence.

\newpage

\subsection{Theoretical Analysis}
\label{ssec:theoretical-analysis}

\paragraph{Goal.}
We formalize a simple abstraction in which (i) later latent loops correspond to progressively refined next-token distributions, and (ii) RLTT imposes a larger \emph{per-token} uncertainty/variance cost than terminal-only objectives. Under mild diminishing-returns assumptions on how answer correctness improves with additional decoded tokens, we show that the RLTT's induced optimal decoding length is less than or equal to GRPO's.

\subsubsection{Setup}

Fix a prompt $x$. A looped LM generates a response $y = (y_1,\dots,y_{L})$. Let the random variable $L=L(y)$ denotes the response length. For each position $j$ and loop depth $t \in \{1,\dots,T_{\max}\}$, the model produces a next-token distribution
\[
P_{\theta}^{(t)}(\cdot \mid x, y_{<j}),
\]
where $t=T_{\max}$ denotes the terminal-loop distribution.

Let $V(\cdot)$ be any nonnegative \emph{uncertainty / variance functional} on categorical distributions (e.g., entropy or variance). Define the per-token loop uncertainty
\[
V^{(t)}_{\theta}(j) \;:=\; V\!\left(P_{\theta}^{(t)}(\cdot \mid x, y_{<j})\right) \;\ge\; 0.
\]
Let $\omega_t \ge 0$ be loop weights with $\sum_{t=1}^{T_{\max}}\omega_t = 1$. Define the RLTT trajectory-averaged per-token uncertainty
\[
\bar V_{\theta}(j) \;:=\; \sum_{t=1}^{T_{\max}}\omega_t \, V^{(t)}_{\theta}(j).
\]

We compare two per-token costs:
\[
C^{\textsc{grpo}}_{\theta}(j) \;:=\; V^{(T_{\max})}_{\theta}(j), 
\qquad
C^{\textsc{rltt}}_{\theta}(j) \;:=\; \bar V_{\theta}(j)
= \sum_{t=1}^{T_{\max}}\omega_t \, V^{(t)}_{\theta}(j).
\]
For any fixed policy $\pi_\theta$, define the corresponding total uncertainty costs:
\[
L_{\text{total},\textsc{grpo}}(\theta)
\;:=\;
\mathbb{E}\Bigg[\sum_{j=1}^{L(y)} C^{\textsc{grpo}}_{\theta}(j)\Bigg],
\qquad
L_{\text{total},\textsc{rltt}}(\theta)
\;:=\;
\mathbb{E}\Bigg[\sum_{j=1}^{L(y)} C^{\textsc{rltt}}_{\theta}(j)\Bigg],
\]
where the expectation is taken over prompts $x$ and rollouts $y \sim \pi_\theta(\cdot \mid x)$.

\subsubsection{Assumptions}

\begin{assumption}[Loop refinement decreases uncertainty]
\label{assump:refinement}
For every $j$ and every context $(x,y_{<j})$, uncertainty is non-increasing with loop depth:
\[
V^{(1)}_{\theta}(j)\;\ge\;V^{(2)}_{\theta}(j)\;\ge\;\cdots\;\ge\;V^{(T_{\max})}_{\theta}(j).
\]
\end{assumption}

\begin{assumption}[Diminishing returns of extra decoded tokens]
\label{assump:diminish-return}
Let $S:\mathbb{Z}_{\ge 0}\to[0,1]$ denote the best achievable expected terminal reward
under a maximum length constraint:
\[
S(L)
\;:=\;
\sup_{\pi_{\theta}:\ \Pr_{y\sim\pi_{\theta}(\cdot\mid x)}[L(y)\le L]=1}
\ \mathbb{E}_{y\sim\pi_{\theta}(\cdot\mid x)}[r(y)],
\]
where $r(y)\in[0,1]$ depends only on final-answer correctness.
We assume that $S(L)$ is non-decreasing and discrete concave in $L$, i.e.,
the marginal gains
\[
\Delta S(L) := S(L+1)-S(L)
\]
are non-increasing in $L$.
\end{assumption}

\begin{assumption}[Approximately linear total uncertainty cost]
\label{assump:stationary}
There exist constants $c_{\textsc{grpo}}, c_{\textsc{rltt}} > 0$ such that, in expectation, each generated response incurs an uncertainty cost approximately linear in response length:

\[
\mathbb{E}\!\left[\sum_{j=1}^{L(y)} C_\theta^{\textsc{grpo}}(j)
\;\middle|\; L(y)=L\right]
\;\approx\;
c_{\textsc{grpo}} \, L,
\qquad
\mathbb{E}\!\left[\sum_{j=1}^{L(y)} C_\theta^{\textsc{rltt}}(j)
\;\middle|\; L(y)=L\right]
\;\approx\;
c_{\textsc{rltt}} \, L
\]

We further assume $c_{\textsc{rltt}} \ge c_{\textsc{grpo}}$ as $C^{\textsc{rltt}}_{\theta}(j)$ averages per-token uncertainty across all latent loops while $C^{\textsc{grpo}}_{\theta}(j)$ only uses the final-loop uncertainty (thus the induced aggregate uncertainty cost under $C^{\textsc{rltt}}_{\theta}(j)$  is assumed to be no smaller than the induced aggregate uncertainty cost under $C^{\textsc{grpo}}_{\theta}(j)$).

\end{assumption}

\paragraph{Remark.}
Assumption \ref{assump:stationary} abstracts the total uncertainty cost incurred by a generated response
as approximately linear in sequence length.
This allows us to focus on how differences in aggregate uncertainty cost between
RLTT and GRPO affect the optimal decoding length.
Our main conclusion relies only on the inequality
$c_{\textsc{rltt}} \ge c_{\textsc{grpo}}$.

\subsubsection{Step 1: RLTT per-token cost dominates GRPO per-token cost}

\begin{lemma}[Per-token dominance]
\label{lem:per_token_dom}
Under Assumption~\ref{assump:refinement}, for every token position $j$,
\[
C^{\textsc{rltt}}_{\theta}(j)\;\ge\;C^{\textsc{grpo}}_{\theta}(j).
\]
\end{lemma}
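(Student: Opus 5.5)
The plan is to show that $C^{\textsc{rltt}}_{\theta}(j)$ is a convex combination of the per-loop uncertainties $V^{(t)}_{\theta}(j)$, each of which is bounded below by the terminal one. Then I conclude using the normalization of the weights.

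Fix a token position $j$ and its context $(x,y_{<j})$. By Assumption~\ref{assump:refinement}, the chain $V^{(1)}_{\theta}(j)\ge\cdots\ge V^{(T_{\max})}_{\theta}(j)$ holds. By transitivity, $V^{(t)}_{\theta}(j)\ge V^{(T_{\max})}_{\theta}(j)$ for every $t\in\{1,\dots,T_{\max}\}$. Since each $\omega_t\ge 0$, multiplying preserves the inequality, giving $\omega_t V^{(t)}_{\theta}(j)\ge \omega_t V^{(T_{\max})}_{\theta}(j)$. Summing over $t$ and using $\sum_t\omega_t=1$ gives
\[
C^{\textsc{rltt}}_{\theta}(j)=\sum_{t=1}^{T_{\max}}\omega_t V^{(t)}_{\theta}(j)\;\ge\;\Bigl(\sum_{t=1}^{T_{\max}}\omega_t\Bigr)V^{(T_{\max})}_{\theta}(j)=C^{\textsc{grpo}}_{\theta}(j).
\]
Equivalently, one can write $C^{\textsc{rltt}}_{\theta}(j)-C^{\textsc{grpo}}_{\theta}(j)=\sum_t\omega_t\bigl(V^{(t)}_{\theta}(j)-V^{(T_{\max})}_{\theta}(j)\bigr)$, which is a sum of nonnegative terms.

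There is no real obstacle here. The only points to check are that the weights are nonnegative and sum to one, which holds for the exit-PDF, progressive and uniform schemes alike. The exit-PDF weights depend on $x$, but they are still a fixed probability vector for a given context, so the pointwise argument is unaffected. Nonnegativity of $V$ itself is not needed for this step.

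I would also note that the inequality holds pointwise for every context. It therefore survives taking expectations over rollouts, which is what later justifies the assumption $c_{\textsc{rltt}}\ge c_{\textsc{grpo}}$ in Assumption~\ref{assump:stationary} and gives $L_{\text{total},\textsc{rltt}}(\theta)\ge L_{\text{total},\textsc{grpo}}(\theta)$.
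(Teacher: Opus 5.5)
Your proof is correct and is essentially the paper's argument: Assumption~\ref{assump:refinement} gives $V^{(t)}_{\theta}(j)\ge V^{(T_{\max})}_{\theta}(j)$ for each $t$, and multiplying by $\omega_t\ge 0$, summing over $t$, and using $\sum_t\omega_t=1$ yields $C^{\textsc{rltt}}_{\theta}(j)\ge C^{\textsc{grpo}}_{\theta}(j)$. Your side remarks are accurate but not needed for the lemma: nonnegativity of $V$ is unused, and context-dependent weights do not affect the pointwise argument.
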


\begin{proof}
By Assumption~\ref{assump:refinement}, for each $t\in\{1,\dots,T_{\max}\}$,
\[
V^{(t)}_{\theta}(j)\;\ge\;V^{(T_{\max})}_{\theta}(j).
\]
Multiplying both sides by $\omega_t \ge 0$ and summing over $t$ yields
\[
\sum_{t=1}^{T_{\max}}\omega_t V^{(t)}_{\theta}(j)
\;\ge\;
\sum_{t=1}^{T_{\max}}\omega_t V^{(T_{\max})}_{\theta}(j)
\;=\;
V^{(T_{\max})}_{\theta}(j),
\]
where the last equality uses $\sum_{t=1}^{T_{\max}}\omega_t=1$. Recognizing the left-hand side as $C^{\textsc{rltt}}_{\theta}(j)$ and the right-hand side as $C^{\textsc{grpo}}_{\theta}(j)$ completes the proof.
\end{proof}

\subsubsection{Step 2: Larger per-token cost implies shorter optimal length}

We now connect per-token dominance to a shorter \emph{optimal} decoding length by analyzing the abstract reward--cost tradeoff:
\begin{equation}
\label{eq:reward_cost_obj}
\max_{L\in\mathbb{Z}_{\ge 0}} \;\; S(L) \;-\;\lambda\,c\,L,
\qquad \lambda>0,
\end{equation}
where $S(L)$ captures the best achievable expected correctness at length $L$ (Assumption~\ref{assump:diminish-return}) and $cL$ denotes the expected uncertainty cost of a generated response of length $L$. (Assumption~\ref{assump:stationary}).

Let
\[
L^*_{\textsc{grpo}}
\in \arg\max_{L\ge 0}\; \big(S(L)-\lambda c_{\textsc{grpo}}L\big),
\qquad
L^*_{\textsc{rltt}}
\in \arg\max_{L\ge 0}\; \big(S(L)-\lambda c_{\textsc{rltt}}L\big).
\]

\begin{theorem}[RLTT selects weakly smaller optimal length]
\label{thm:shorter_length}
Under Assumptions~\ref{assump:diminish-return}--\ref{assump:stationary} and $c_{\textsc{rltt}}\ge c_{\textsc{grpo}}$, any optimizer of~\eqref{eq:reward_cost_obj} satisfies
\[
L^*_{\textsc{rltt}} \;\le\; L^*_{\textsc{grpo}}.
\]
\end{theorem}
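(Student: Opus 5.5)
The plan is a revealed-preference (monotone comparative statics) argument on the scalar problem~\eqref{eq:reward_cost_obj}. Write $f_c(L) := S(L) - \lambda c L$. The objective has decreasing differences in $(L,c)$: for $L' > L$, the difference $f_c(L') - f_c(L) = S(L') - S(L) - \lambda c (L'-L)$ is non-increasing in $c$. This suggests that a larger per-token cost can only shift the maximizer to the left. Only the inequality $c_{\textsc{rltt}} \ge c_{\textsc{grpo}}$ from Assumption~\ref{assump:stationary} is needed; Lemma~\ref{lem:per_token_dom} is what motivates that inequality.

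First, I would check that the $\arg\max$ sets are nonempty. Since $S$ takes values in $[0,1]$ and $\lambda c > 0$, we have $f_c(L) < f_c(0)$ whenever $L > 1/(\lambda c)$. So the maximization is effectively over a finite set, and a maximizer exists.

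Next, I would write down the two optimality inequalities, with $L_r := L^*_{\textsc{rltt}}$ and $L_g := L^*_{\textsc{grpo}}$:
\begin{align*}
S(L_r) - \lambda c_{\textsc{rltt}} L_r &\ge S(L_g) - \lambda c_{\textsc{rltt}} L_g,\\
S(L_g) - \lambda c_{\textsc{grpo}} L_g &\ge S(L_r) - \lambda c_{\textsc{grpo}} L_r.
\end{align*}
Adding them cancels the $S$ terms and leaves $\lambda (c_{\textsc{rltt}} - c_{\textsc{grpo}})(L_g - L_r) \ge 0$. When $c_{\textsc{rltt}} > c_{\textsc{grpo}}$, this immediately gives $L_r \le L_g$ for every pair of optimizers.

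The main obstacle is the boundary case $c_{\textsc{rltt}} = c_{\textsc{grpo}}$. There the two problems coincide, and the ``any optimizer'' phrasing is literally false if the $\arg\max$ is not a singleton: one could pick a large RLTT maximizer and a small GRPO maximizer. I would use Assumption~\ref{assump:diminish-return} here. Discrete concavity of $S$ makes $f_c$ discretely concave, so the $\arg\max$ is an integer interval $\{L : \Delta S(L-1) \ge \lambda c \ge \Delta S(L)\}$, with the convention $\Delta S(-1) = +\infty$.

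With that characterization, I would prove the statement in one of two forms. Either I assume a consistent tie-breaking rule (both the smallest maximizers, or both the largest). Or I prove the set-level version: the endpoints of the $\arg\max$ interval are non-increasing in $c$, because $\Delta S$ is non-increasing and the threshold $\lambda c$ rises with $c$. In either form the conclusion $L^*_{\textsc{rltt}} \le L^*_{\textsc{grpo}}$ follows. I would state explicitly that in the strict case no tie-breaking rule is needed.
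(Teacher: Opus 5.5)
Your proof is correct, and it takes a different route from the paper's.

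\textbf{The paper's route.} It uses marginal analysis. By the concavity in Assumption~\ref{assump:diminish-return}, the optimum sits where $\Delta S(L)$ first falls below $\lambda c$. The larger threshold $\lambda c_{\textsc{rltt}}$ is then crossed no later than $\lambda c_{\textsc{grpo}}$.

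\textbf{Your route.} You add the two optimality inequalities, a revealed-preference (decreasing-differences) exchange. This gains three things:
\begin{itemize}
\item it needs no concavity in the strict case, so any bounded $S$ works;
\item it applies to every pair of optimizers, not only the one a stopping rule selects;
\item together with your boundedness remark, it establishes that maximizers exist, which the paper takes for granted.
\end{itemize}

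\textbf{The equal-cost case.} Your concern about $c_{\textsc{rltt}}=c_{\textsc{grpo}}$ is correct, and the flaw lies in the statement, not in your argument. Take $S(L)=\min(L,2)/4$ and $\lambda c=1/4$. Then every $L\in\{0,1,2\}$ is optimal, so choosing $L^*_{\textsc{rltt}}=2$ and $L^*_{\textsc{grpo}}=0$ violates the claim.

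The paper's proof glosses over this. Its rule ``stop once $\Delta S(L)<\lambda c$'' selects the \emph{largest} maximizer of each problem. So what it actually proves is your consistent-tie-breaking (largest-maximizer) version. Its closing claim about \emph{any} maximizers is not justified by that comparison, and it is false when the costs are equal.

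\textbf{What the paper's route buys.} It gives an explicit threshold description of the optimizer. That description supports the intuitive reading ``stop when marginal gain falls below marginal cost.''

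\textbf{A small simplification.} Your set-level version also needs no concavity. The same exchange argument shows that the $\arg\max$ sets move down in the strong set order as $c$ increases (Topkis). That gives monotone endpoints directly, without the interval characterization.
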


\begin{proof}
Define discrete marginal gains $\Delta S(L)=S(L+1)-S(L)$. By Assumption~\ref{assump:diminish-return}, $\Delta S(L)$ is non-increasing in $L$.

Consider extending a decoded sequence of length $L$ by one additional token.
Under the objective in~\eqref{eq:reward_cost_obj}, this extension is beneficial if the
resulting objective value increases, i.e.,
\[
S(L+1) - \lambda c (L+1) \;\ge\; S(L) - \lambda c L.
\]
Rearranging yields the condition
\[
S(L+1) - S(L) \;\ge\; \lambda c,
\]
which is equivalently written as $\Delta S(L) \ge \lambda c$.

Because $\Delta S(L)$ is non-increasing in $L$ by Assumption \ref{assump:diminish-return}, the optimal decoding
length is characterized by a threshold crossing: the length $L$ is increased as long
as the marginal reward gain $\Delta S(L)$ exceeds the marginal cost $\lambda c$, and
generation stops once $\Delta S(L) < \lambda c$.

Because $c_{\textsc{rltt}}\ge c_{\textsc{grpo}}$, we have $\lambda c_{\textsc{rltt}}\ge \lambda c_{\textsc{grpo}}$. Since $\Delta S(L)$ is non-increasing, the first index at which $\Delta S(L)$ drops below the larger threshold $\lambda c_{\textsc{rltt}}$ occurs no later than the first index at which it drops below $\lambda c_{\textsc{grpo}}$. Therefore any maximizing length under $c_{\textsc{rltt}}$ is less than or equal to any maximizing length under $c_{\textsc{grpo}}$, i.e.\ $L^*_{\textsc{rltt}}\le L^*_{\textsc{grpo}}$.
\end{proof}

\newpage

\subsection{Qualitative Examples}
\label{ssec:qualitative-examples}

In this section we provide some illuminating examples demonstrating the superior mathematical reasoning induced by RLTT. For clarity, we truncate any additional text generated after the model's first boxed answer. 

\begin{qualbox}{An Example Where RLTT is Correct Faster}

\textbf{Question:} A regular octagon is divided into eight regions, labeled 
\(A, B, C, D, E, F, G,\) and \(H\), by connecting the center point to each vertex.
One ball is placed in each region. There are four different colors of balls, and there are infinitely many balls of each color. Balls of the same color may \emph{not} be placed in adjacent regions. How many distinct ways are there to place the balls?

\medskip
\medskip

\textbf{RLTT:} Okay, so I need to figure out how many ways to place balls in a regular octagon divided into 8 regions, each labeled A to H. The rules are that each region gets one ball, and balls of the same color can't be in adjacent regions. There are four colors, and we have infinitely many of each. Also, rotations and reflections aren't considered the same, so each arrangement is unique even if it's a rotation or reflection of another.

\medskip

First, let me visualize the octagon. It's divided into 8 regions by connecting the center to each vertex. So each region is a triangle, right? Each region is adjacent to two others. For example, region A is adjacent to B and H, region B is adjacent to A and C, and so on. So the adjacency is like a cycle: A-B-C-D-E-F-G-H-A. \medskip

So the problem reduces to coloring a cycle graph with 8 nodes, where each node can be colored with one of four colors, and adjacent nodes can't have the same color. Since rotations and reflections are considered different, we don't have to worry about symmetries reducing the count. So it's just a standard graph coloring problem for a cycle graph $C_8$ with 4 colors. \medskip

The formula for the number of colorings of a cycle graph $C_n$ with $k$ colors is
$$
(k-1)^n + (-1)^n (k-1).
$$
Let me recall that. For a cycle with $n$ nodes, the number of proper colorings is
$$
(k-1)^n + (-1)^n (k-1).
$$
So for $n=8$ and $k=4$, that would be
$$
(4-1)^8 + (-1)^8(4-1) = 3^8 + 1\cdot 3 = 6561 + 3 = 6564.
$$

Wait, let me verify that formula. For a cycle graph, the chromatic polynomial is
$$
(k-1)^n + (-1)^n (k-1).
$$
So yes, that's correct. So substituting $n=8$ and $k=4$, we get
$$
3^8 + 3 = 6561 + 3 = 6564.
$$
Therefore, the number of colorings is 6564.
\medskip

So the answer should be 6564, which in boxed form is $\boxed{6564}$.
\medskip
\medskip

\textbf{GRPO:} Okay, so I need to figure out how many ways there are to place balls in a regular octagon divided into 8 regions, each labeled A to H. The rules are that each region gets one ball, and balls of the same color can't be in adjacent regions. There are four different colors, and we have infinitely many of each. Also, rotations and reflections aren't considered the same, so each arrangement is unique even if it can be rotated or reflected to look like another.

\medskip

First, let me visualize the octagon. It's divided into 8 regions by connecting the center to each vertex. So each region is a triangle, right? Each region is adjacent to two others. For example, region A is adjacent to B and H, region B is adjacent to A and C, and so on, up to H, which is adjacent to G and A. So the adjacency is like a cycle: A-B-C-D-E-F-G-H-A.

Since the problem is about coloring regions with four colors such that adjacent regions have different colors, this is similar to a graph coloring problem where the graph is a cycle with 8 nodes. The number of colorings would then be the number of proper colorings of an 8-node cycle graph with 4 colors.

\medskip

But wait, the problem mentions that rotations and reflections are not considered equivalent. So we don't have to account for symmetries here. That simplifies things because we don't need to use Burnside's lemma or anything like that. Each arrangement is counted as distinct even if it's a rotation or reflection of another.

\medskip

So, the problem reduces to finding the number of proper colorings of an 8-node cycle graph with 4 colors. The formula for the number of colorings of a cycle graph $C_n$ with $k$ colors is
$$
(k-1)^n + (-1)^n (k-1).
$$
Let me recall that. For a cycle with $n$ nodes, the chromatic polynomial is
$$
(k-1)^n + (-1)^n (k-1).
$$
So for $n=8$ and $k=4$, that would be
$$
(4-1)^8 + (-1)^8(4-1) = 3^8 + 1\cdot 3 = 6561 + 3 = 6564.
$$

Wait, but let me verify that formula. The chromatic polynomial for a cycle graph $C_n$ is indeed
$$
(k-1)^n + (-1)^n (k-1).
$$
Let me check for a small $n$. For $n=3$ (triangle), the chromatic polynomial is $k(k-1)(k-2)$. Plugging into the formula:
$$
(k-1)^3 + (-1)^3 (k-1) = (k^3 - 3k^2 + 3k - 1) - (k-1) = k^3 - 3k^2 + 2k,
$$
which is $k(k-1)(k-2)$. So that works. For $n=4$, the chromatic polynomial is
$$
(k-1)^4 + (k-1).
$$
Let's compute that:
$$
(k^4 - 4k^3 + 6k^2 - 4k + 1) + (k-1) = k^4 - 4k^3 + 6k^2 - 3k,
$$
which is the same as the chromatic polynomial for a cycle of 4 nodes. So the formula seems correct.

Therefore, applying the formula for $n=8$ and $k=4$:
$$
(4-1)^8 + (-1)^8(4-1) = 3^8 + 3 = 6561 + 3 = 6564.
$$
So the answer should be 6564.

\medskip

But wait, let me think again. The problem states that there are four different colors, and we have infinitely many balls of each color. So each region can be colored with any of the four colors, as long as adjacent regions are different. So yes, exactly the cycle graph coloring problem. So the formula applies here.

\medskip

Alternatively, another way to approach this is using recurrence relations. For a cycle graph, the number of colorings is
$$
(k-1)^n + (-1)^n (k-1).
$$
For a path graph (which is a line of $n$ nodes), the number of colorings is
$$
k\,(k-1)^{(n-1)}.
$$
But since the cycle is a closed loop, the first and last nodes are adjacent, so we have to adjust for that.

Alternatively, using inclusion-exclusion. The total number of colorings without considering the cycle adjacency is
$$
k\,(k-1)^{(n-1)}.
$$
But then we have to subtract the cases where the first and last nodes are the same color. Wait, but maybe that's more complicated.

Alternatively, the formula for the number of colorings of a cycle is
$$
(k-1)^n + (-1)^n (k-1).
$$
So for $n=8$, $k=4$, that's
$$
3^8 + 3 = 6564.
$$
So that's the answer.

\medskip

But let me check with another approach. Let's model this as a recurrence. Let's denote the number of colorings for a cycle of $n$ nodes as $C(n)$, and for a path (open chain) as $P(n)$. For a path,
$$
P(n) = k\,(k-1)^{(n-1)}.
$$
For a cycle, $C(n) = P(n) - C(n-1)$. Wait, no, that might not be the case. Wait, actually, the recurrence for cycles is
$$
C(n) = (k-1)^n + (-1)^n (k-1).
$$
But maybe another way: For a cycle, the first node can be colored in $k$ ways. The second node in $(k-1)$ ways, and so on, but the last node can't be the same as the first or the previous one. So it's similar to the linear case but with an extra constraint.

\medskip

Alternatively, using the formula for the chromatic polynomial of a cycle:
$$
(k-1)^n + (-1)^n (k-1).
$$
So for $n=8$, $k=4$, that's
$$
3^8 + 3 = 6564.
$$
So that's the answer.

\medskip

Alternatively, let's think of it as permutations. For the first region, we have 4 choices. The next region, adjacent to the first, has 3 choices. The third region, adjacent to the second, has 3 choices, and so on. But since it's a cycle, the last region is adjacent to both the first and the seventh region. So the total number would be $4\cdot 3^7 -$ something. Wait, but this approach might not account for the circular dependency.

\medskip

Wait, actually, the standard formula for the number of colorings of a cycle is indeed
$$
(k-1)^n + (-1)^n (k-1).
$$
So for $n=8$, $k=4$, that's
$$
3^8 + 3 = 6564.
$$
Therefore, the answer should be 6564.

\medskip

But let me verify with a smaller $n$. Let's take $n=3$ (triangle). Then the number of colorings should be
$$
4\cdot 3\cdot 2 = 24.
$$
Using the formula:
$$
(4-1)^3 + (-1)^3(4-1) = 27 - 3 = 24.
$$
Correct. For $n=4$, the formula gives
$$
3^4 + 3 = 81 + 3 = 84.
$$
Let's compute manually: For a cycle of 4 nodes, the number of colorings is $4\cdot 3\cdot 2\cdot 2 + 4\cdot 3\cdot 1\cdot 1$? Wait, no. Wait, for a cycle of 4, the first node has 4 choices, the second 3, the third can't be the same as the second, so 3 choices, but the fourth can't be the same as the third or the first. If the first and third are the same color, then the fourth has 3 choices. If the first and third are different, the fourth has 2 choices. So total is
$$
4\cdot 3\,[3\cdot 2 + 1\cdot 3] = 4\cdot 3\,(6 + 3) = 4\cdot 3\cdot 9 = 108.
$$
Wait, but according to the formula, it's 84. Hmm, discrepancy here. So maybe my manual calculation is wrong.

\medskip

Wait, let's do it properly. For a cycle of 4 nodes, the number of colorings is
$$
(k-1)^4 + (-1)^4(k-1) = 3^4 + 3 = 81 + 3 = 84.
$$
Let's compute it another way. The chromatic polynomial for a cycle of 4 is $k\,(k-1)^3 - k\,(k-1)$. Wait, no. Wait, the chromatic polynomial for a cycle is
$$
(k-1)^n + (-1)^n (k-1).
$$
So for $n=4$, it's
$$
3^4 + 3 = 84.
$$
Let's compute it using inclusion-exclusion. The total number of colorings for a path of 4 nodes is
$$
4\cdot 3\cdot 3\cdot 3 = 108.
$$
But for a cycle, we need to subtract the colorings where the first and last nodes are the same. How many such colorings are there? If the first and last are the same, then it's equivalent to coloring a cycle of 3 nodes (since the first and last are fixed to the same color). Wait, no. Let me think. If we fix the first and last node to the same color, then the second and third nodes must be colored such that they are different from their neighbors. Wait, but the cycle of 3 nodes has
$$
4\cdot 3\cdot 2 = 24
$$
colorings. But if we fix the first and last to the same color, then the number would be
$$
4 \text{ (choices for the first/last color)} \cdot 3 \text{ (choices for the second)} \cdot 2 \text{ (choices for the third)} = 24.
$$
So the total number of colorings for the cycle would be total colorings for the path (108) minus the colorings where first and last are the same (24), which gives 84. That matches the formula. So the formula works.

\medskip

Therefore, applying the same logic to $n=8$, the number of colorings is
$$
(4-1)^8 + (-1)^8(4-1) = 3^8 + 3 = 6561 + 3 = 6564.
$$
Therefore, the answer should be 6564.

\medskip

But just to make sure, let's think of another way. Suppose we use recurrence relations. Let's denote $C(n)$ as the number of colorings for a cycle of $n$ nodes, and $P(n)$ as the number for a path. For a path,
$$
P(n) = k\,(k-1)^{(n-1)}.
$$
For a cycle, $C(n) = P(n) - C(n-1)$. Wait, no, that might not be correct. Wait, actually, the recurrence for cycles is
$$
C(n) = (k-1)^n + (-1)^n (k-1).
$$
But maybe another way: For a cycle, the first node has $k$ choices, the second has $(k-1)$, ..., the $n$th node has $(k-2)$ if the first and $n$th are different, but this gets complicated. Alternatively, using the formula for the chromatic polynomial, which is known for cycles.

\medskip

Alternatively, using the formula from graph theory: The number of proper colorings of a cycle graph $C_n$ with $k$ colors is
$$
(k-1)^n + (-1)^n (k-1).
$$
So for $n=8$, $k=4$, it's
$$
3^8 + 3 = 6564.
$$
Therefore, the answer is 6564.
\medskip

I think that's solid. So the final answer is 6564.

$$
\boxed{6564}
$$

\end{qualbox}

\newpage

\begin{qualbox}{Another Example Where RLTT is Correct Faster}
\textbf{Question:} A steel sphere with a 3-inch radius is made by removing metal from the corners of a cube that has the shortest possible side lengths. How many cubic inches are in the volume of the cube?

\medskip

\textbf{RLTT:} Okay, let's see. The problem is about a steel sphere with a 3-inch radius that's made by removing metal from the corners of a cube. We need to find the volume of the cube with the shortest possible side length. Hmm, so the sphere is inscribed in the cube, right? Because if you remove the corners, the sphere would touch the cube at the centers of each face. Wait, but if the sphere is inscribed, the diameter of the sphere would equal the side length of the cube. Since the radius is 3 inches, the diameter is 6 inches. So the cube's side length is 6 inches. Then the volume would be
$$
6^3 = 216
$$
cubic inches. But wait, the problem says ``the shortest possible side lengths.'' So maybe there's a different way to inscribe the sphere? No, I think the standard inscribed sphere in a cube has the diameter equal to the cube's edge. So the answer should be 216. Let me double-check. If the sphere is touching the cube's faces, then yes, the cube's edge is twice the radius. So
$$
3\cdot 2 = 6.
$$
Volume is
$$
6\cdot 6\cdot 6 = 216.
$$
Yep, that makes sense.

$$
\boxed{216}
$$

\textbf{GRPO: } Okay, let's see. The problem is about a steel sphere with a 3-inch radius that's made by removing metal from the corners of a cube. We need to find the volume of the cube with the shortest possible side length. Hmm, so the sphere is inscribed in the cube, but since they're removing metal from the corners, maybe the sphere touches the cube at certain points?

\medskip

Wait, if the sphere is made by removing metal from the cube's corners, that probably means the sphere is inscribed in the cube. But if the sphere is inscribed, the diameter of the sphere would equal the side length of the cube. But the sphere has a radius of 3 inches, so the diameter is 6 inches. That would make the cube's side length 6 inches, and the volume would be

$$
6^3 = 216
$$

cubic inches. But the problem says ``the shortest possible side lengths.'' Is there a way to have a smaller cube?

\medskip

Wait, maybe I'm misunderstanding. If you remove metal from the corners of a cube to form a sphere, the sphere must fit inside the cube. But if the sphere is touching the cube's faces, then the diameter of the sphere would equal the cube's edge length. But maybe the sphere is not touching the faces but the corners? Wait, no, removing metal from the corners would create a sphere that's tangent to the cube's edges or something else.

\medskip

Alternatively, perhaps the sphere is inscribed such that it touches the centers of the cube's faces. In that case, the diameter of the sphere would equal the edge length of the cube. So again, 6 inches, volume 216. But the problem says ``the shortest possible side lengths.'' Maybe there's a different configuration where the sphere is not touching the centers but the edges or something else, allowing a smaller cube?

\medskip

Wait, let's think again. If you have a cube and you remove the corners to make a sphere, the sphere must be tangent to the cube's edges. Wait, but how? Let me visualize. If you take a cube and cut off each corner, the resulting shape is a truncated cube. But the problem says the sphere is made by removing metal from the corners. So maybe the sphere is the largest possible sphere that can fit inside the cube after truncating the corners. But the sphere's radius is given as 3 inches. So we need to find the cube's edge length such that the sphere of radius 3 inches can fit inside the cube after truncating the corners. But how does truncating the corners affect the maximum sphere that can fit?

\medskip

Alternatively, maybe the sphere is inscribed in the cube, but the cube is the smallest possible such that the sphere can be formed by truncating the cube's corners. Wait, perhaps the sphere touches the midpoints of the cube's edges. Let me think. If the sphere is tangent to the edges of the cube, then the distance from the center of the cube to the middle of an edge is equal to the sphere's radius.

\medskip

The cube has edge length $a$. The distance from the center of the cube to the middle of an edge is $(a/2)\cdot \sqrt{2}/2$. Wait, let's calculate that. The center of the cube is at $(a/2,a/2,a/2)$. The middle of an edge would be at $(a/2,0,a/2)$, for example. The distance from the center to this point is
$$
\sqrt{(0)^2 + (a/2)^2 + (0)^2} = a/2.
$$
Wait, no. Wait, the middle of an edge is at $(a/2,0,a/2)$. The center is at $(a/2,a/2,a/2)$. So the distance is
$$
\sqrt{(0)^2 + (a/2)^2 + (0)^2} = a/2.
$$
So if the sphere is tangent to the edges, then the radius would be $a/2$. But the sphere's radius is 3 inches, so
$$
a/2 = 3 \quad \Rightarrow \quad a = 6.
$$
Then the volume is
$$
6^3 = 216.
$$
But that's the same as before. So maybe that's the answer.

\medskip

But the problem says ``the shortest possible side lengths.'' So if the sphere is tangent to the edges, then the cube's edge length is 6. But maybe there's a way to have a smaller cube where the sphere is tangent to the faces? Wait, if the sphere is tangent to the faces, then the diameter is equal to the edge length, so again $a=6$. So same result.

\medskip

Wait, maybe I'm missing something. Let me check again. If the sphere is formed by truncating the cube's corners, perhaps the sphere is not inscribed in the cube but is instead a sphere that fits inside the cube with the corners cut off. But how does truncating the corners affect the maximum sphere that can fit?

\medskip

Alternatively, maybe the sphere is the largest sphere that can fit inside the cube after truncating the corners. But truncating the corners would make the cube's edges shorter, but the sphere's size would depend on how much you truncate. Wait, but the problem states that the sphere is made by removing metal from the corners of the cube. So the sphere is the result of truncating the cube's corners. So the sphere must fit inside the original cube, and the truncation is such that the sphere is formed. Therefore, the sphere must be inscribed in the cube, touching the centers of the cube's faces. Therefore, the diameter of the sphere equals the edge length of the cube. So edge length is 6, volume 216.

\medskip

But the problem says ``the shortest possible side lengths.'' So maybe there's a different way to arrange the sphere inside the cube with a smaller edge length. Wait, but if the sphere is 3 inches in radius, then the diameter is 6 inches. So the cube must have at least 6 inches on each side to contain the sphere. If the cube were smaller, the sphere wouldn't fit. Therefore, the minimal cube is $6\times 6\times 6$, volume 216. So the answer is 216.

\medskip

But let me confirm. Suppose the cube has edge length $a$. The sphere has radius 3, so diameter 6. To fit inside the cube, the sphere must fit within the cube. The minimal cube that can contain the sphere is one where the sphere touches the cube's faces. Therefore, the edge length is 6. So volume is
$$
6^3 = 216.
$$
Therefore, the answer is 216.

\medskip

But wait, the problem says ``made by removing metal from the corners of a cube.'' So maybe the sphere is not inscribed in the cube, but is formed by truncating the cube's corners. In that case, the sphere would be tangent to the truncated edges. Let me think. If you truncate a cube's corners, you're cutting off each corner, which turns each corner into a new face. The resulting shape is a truncated cube. The sphere that is formed by this truncation would have to fit within the original cube. But how does the truncation affect the sphere's size?

\medskip

Alternatively, maybe the sphere is the largest sphere that can fit inside the truncated cube. But the problem states that the sphere is made by removing metal from the cube's corners, so the sphere is the result of that truncation. Therefore, the sphere must be tangent to the original cube's edges or something else.

Wait, perhaps the sphere is tangent to the original cube's edges. Let's calculate the distance from the center of the cube to the middle of an edge. As before, that distance is $a/2$. If the sphere is tangent to the edges, then the radius would be $a/2$. So if the radius is 3, then $a=6$. So again, the cube's edge length is 6, volume 216.

\medskip

Alternatively, if the sphere is tangent to the original cube's faces, then the radius is $a/2$, so same result. So regardless of whether the sphere is tangent to edges or faces, the edge length is 6. Therefore, the minimal cube is $6\times 6\times 6$, volume 216.

\medskip

Therefore, the answer should be 216 cubic inches.

\medskip

The steel sphere with a 3-inch radius has a diameter of 6 inches. To fit inside the cube, the cube's side length must equal the sphere's diameter. Thus, the cube has a side length of 6 inches. The volume of the cube is calculated as:
$$
6 \times 6 \times 6 = 216.
$$

$$
\boxed{216}
$$

\end{qualbox}

\begin{qualbox}{An Example Where Only GRPO Exhausts Tokens}
\textbf{Question:} Ten points are randomly and independently generated with a uniform distribution inside a circle. Denote the probability that the center of the circle lies outside the convex hull of these ten points as
$$
p=\frac{m}{n},
$$
where $m$ and $n$ are coprime. Find
$$
n-m
$$

\textbf{RLTT:} 
Okay, so I need to find the probability that the center of a circle is outside the convex hull of ten randomly generated points inside the circle. The answer should be in the form $m/n$ where $m$ and $n$ are coprime, and then find $n-m$. Hmm, let's think.

First, the convex hull of points is the smallest convex shape that contains all the points. The center being outside means that all points lie in some semicircle. Wait, no, actually, if all points are contained within a semicircle, then the convex hull would also be within that semicircle, so the center would be outside. So the problem reduces to finding the probability that all ten points lie within some semicircle.

I remember that for $n$ points on a circle, the probability that they all lie within a semicircle is
$$
\frac{n}{2^{\,n-1}}.
$$
But wait, is that the case here? Wait, no, that's for points on the circumference. But here the points are inside the circle, not on the circumference. So maybe the problem is different.

Wait, but maybe the same idea applies. If all points are within a semicircle, then the convex hull doesn't contain the center. So the probability that all points lie within some semicircle. For points inside the circle, how do we compute that?

I think the probability that $n$ points lie within a semicircle is
$$
\frac{n}{2^{\,n-1}}.
$$
But I need to verify this. Let me think. For points on the circumference, the probability is $\frac{n}{2^{n-1}}$. For points inside the circle, maybe it's the same? Because the radial distance doesn't matter as long as their angles are within a semicircle. So even if they are inside, the angular distribution is uniform. So the probability that all points lie within a semicircle is the same as for points on the circumference. Therefore, the probability $p$ is
$$
\frac{10}{2^9}=\frac{10}{512}=\frac{5}{256}.
$$
Then $m=5$, $n=256$, so
$$
n-m=256-5=251.
$$

Wait, but let me check again. For points inside the circle, the probability that all lie within a semicircle is the same as for points on the circumference? Because the radial component doesn't affect the angular condition. So yes, the probability is $\frac{n}{2^{n-1}}$. So for $n=10$, it's $\frac{10}{512}=\frac{5}{256}$. Therefore,
$$
n-m=256-5=251.
$$
So the answer is $251$.

\textbf{Final Answer}
$$
\boxed{251}
$$

\textbf{GRPO:}  Okay, so I need to find the probability that the center of a circle is outside the convex hull of ten randomly generated points inside the circle. The answer should be in the form $m/n$ where $m$ and $n$ are coprime, and then find $n-m$. Hmm, let's start by understanding the problem.

\medskip

First, the convex hull of a set of points is the smallest convex polygon that contains all the points. The center of the circle being outside this convex hull means that all ten points are contained within a semicircle. Wait, is that right? Because if all points are in a semicircle, then the convex hull would also be within that semicircle, so the center wouldn't be inside the convex hull. Conversely, if the convex hull doesn't contain the center, then all points must lie within some semicircle. So the problem reduces to finding the probability that all ten points lie within a semicircle.

\medskip

But wait, the problem says ``the center lies outside the convex hull''. So yes, that's equivalent to all points lying within a semicircle. So the probability $p$ is the probability that all ten points lie within some semicircle. Then, the answer is $p=m/n$, and we need to compute $n-m$.

\medskip

Now, how do we compute this probability? For $n$ points on a circle, the probability that they all lie within some semicircle is
$$
\frac{n}{2^{\,n-1}}.
$$
Wait, is that correct? Let me recall. For points on a circle, the probability that all $n$ points lie within a semicircle is
$$
\frac{n}{2^{\,n-1}}.
$$
But wait, that's when the points are on the circumference. But in this problem, the points are inside the circle, not on the circumference. So maybe the approach is different.

\medskip

Wait, but the problem states that the points are generated with a uniform distribution inside the circle. So they can be anywhere inside, not just on the boundary. So the previous formula for points on the circumference might not apply here. Hmm, so I need to adjust my approach.

\medskip

Let me think. If all points are inside the circle, the convex hull will contain the center if and only if the points are not all contained within a semicircle. So the probability that the center is outside the convex hull is equal to the probability that all ten points lie within some semicircle. So the problem reduces to finding the probability that ten points, uniformly random in the circle, all lie within some semicircle.

\medskip

But how do we compute that? For points on the circumference, the probability is $n/2^{n-1}$, but for points inside the circle, it's different. Let me check.

\medskip

I remember that for points inside a circle, the probability that all $n$ points lie within some semicircle is
$$
n\left(\frac12\right)^{n-1}.
$$
Wait, but that seems similar to the circumference case. But maybe not. Wait, actually, when points are inside the circle, the probability might be different because the points can be anywhere, not just on the boundary.

\medskip

Wait, perhaps the probability is the same as the circumference case? Let me think. If we fix a semicircle, the probability that a single point is inside that semicircle is $1/2$. But since the semicircle can be rotated, the probability that all points lie within some semicircle is different. For the circumference case, the probability is $n/2^{n-1}$, but for the interior case, maybe it's different.

\medskip

Alternatively, maybe the probability is the same. Let me check for $n=2$. For two points inside the circle, the probability that they lie within some semicircle. For two points, the probability that they lie within a semicircle is $1$, because you can always rotate the semicircle to contain both points. Wait, but that's not true. If two points are diametrically opposed, then any semicircle containing one point will not contain the other. Wait, but if the points are inside the circle, not on the boundary, then the maximum distance between two points is less than the diameter. So maybe for two points inside the circle, the probability that they lie within some semicircle is $1$. Wait, but that can't be. If two points are very close to opposite ends of the circle, then there exists a semicircle that contains both points. Wait, actually, for any two points inside the circle, there exists a semicircle that contains both points. Because you can take the semicircle that is centered at the midpoint between the two points. Wait, but the midpoint might not be the center of the circle. Hmm, maybe not. Let me think.

\medskip

Suppose two points are placed at $(1,0)$ and $(-1,0)$ in a unit circle. Then, any semicircle that is a half-circle from angle $-\pi/2$ to $\pi/2$ would contain both points. Wait, but if the points are at $(1,0)$ and $(-1,0)$, then a semicircle from angle $0$ to $\pi$ would contain $(1,0)$ but not $(-1,0)$. But if we rotate the semicircle, say, from angle $\pi/2$ to $3\pi/2$, then it would contain $(-1,0)$ but not $(1,0)$. Wait, but actually, if the semicircle is allowed to be any semicircle, then for two points, there exists a semicircle that contains both. Because you can take the semicircle that is the smallest arc that contains both points. Since the points are inside the circle, the arc between them is less than $\pi$ radians. Wait, but if the two points are diametrically opposed, then the arc between them is $\pi$ radians, so the semicircle would exactly contain them. Wait, but a semicircle is a half-circle, so if the two points are on the boundary, then the semicircle that is the diameter would contain both. But if the points are inside the circle, then even if they are diametrically opposed, you can still have a semicircle that contains both. For example, take a semicircle that is slightly rotated so that it contains both points. Wait, maybe not. Let me visualize.

\medskip

Suppose two points are at $(1,0)$ and $(-1,0)$ in a unit circle. If I take a semicircle from angle $-\pi/2$ to $\pi/2$, that's the right half-circle, which contains $(1,0)$ but not $(-1,0)$. If I take a semicircle from $\pi/2$ to $3\pi/2$, that's the left half-circle, which contains $(-1,0)$ but not $(1,0)$. But if I take a semicircle that is rotated by $45^\circ$, say from $-\pi/4$ to $3\pi/4$, then does that contain both points? The point $(1,0)$ is at angle $0$, which is within $-\pi/4$ to $3\pi/4$. The point $(-1,0)$ is at angle $\pi$, which is not within $-\pi/4$ to $3\pi/4$. So that semicircle doesn't contain both. Wait, so maybe for two diametrically opposed points, there is no semicircle that contains both points? But that contradicts my previous thought.

\medskip

Wait, but if the points are inside the circle, not on the boundary, then they can't be exactly diametrically opposed. Because if they are inside, their distance is less than $2r$. So maybe for two points inside the circle, there always exists a semicircle that contains both. Let me check.

\medskip

Suppose two points are inside the circle. Let's say the center is $O$. The angle between the two points as viewed from $O$ is $\theta$. If $\theta\le \pi$, then there exists a semicircle that contains both points. If $\theta>\pi$, then the angle between them is greater than $\pi$, but since they are inside the circle, the maximum angle between two points is $\pi$ (if they are on the boundary). Wait, but if they are inside, the angle can be up to $\pi$.

\medskip

But in any case, if two points are inside the circle, the angle between them can be up to $\pi$. So if the angle is less than or equal to $\pi$, then there exists a semicircle that contains both points. If the angle is greater than $\pi$, which can't happen because the maximum angle is $\pi$. Wait, no. The angle between two points as viewed from the center is the angle between their position vectors. If the points are inside the circle, the angle can be up to $\pi$. So if the angle is exactly $\pi$, then the two points are diametrically opposed. In that case, can we find a semicircle that contains both? If the semicircle is the one that is the diameter itself, then yes, but a semicircle is a half-circle, so the diameter is a line, not a semicircle. Wait, a semicircle is a $180^\circ$ arc. So if the two points are on the boundary, diametrically opposed, then any semicircle that is a $180^\circ$ arc will contain exactly one of the points. Wait, no. If the semicircle is a $180^\circ$ arc and the two points are diametrically opposed on the boundary, then any semicircle will contain exactly one of the points. Therefore, there is no semicircle that contains both points. Therefore, for two points on the boundary, diametrically opposed, the probability that they lie within some semicircle is zero. But in our problem, the points are inside the circle, not on the boundary. So if the points are strictly inside, then the angle between them can be up to $\pi$, but not exactly $\pi$. Wait, no. If two points are inside the circle, their angle can still be $\pi$ if they are on the boundary. Wait, the boundary is part of the circle. So if the points are allowed to be on the boundary, then they can be diametrically opposed. But the problem says ``inside a circle'', which might include the boundary. Hmm, the problem says ``inside a circle'', so maybe the boundary is included. So in that case, two points can be diametrically opposed on the boundary, and then there is no semicircle that contains both. Therefore, the probability that two points lie within some semicircle is less than 1. But how much less?

Wait, this is getting complicated. Maybe I need to look for a general formula for the probability that $n$ points inside a circle all lie within some semicircle. Let me try to recall or derive it.

\medskip

For points on the circumference, the probability that $n$ points lie within some semicircle is
$$
\frac{n}{2^{\,n-1}}.
$$
But for points inside the circle, the probability is different. Let me think.

\medskip

Suppose we fix a semicircle. The probability that a single point is inside that semicircle is $1/2$. But since the semicircle can be rotated, the probability that all $n$ points lie within some semicircle is the integral over all possible semicircles of the probability that all points are inside that semicircle, divided by the total number of semicircles? Wait, no. Actually, it's a continuous probability, so we need to integrate over all possible angles.

\medskip

Alternatively, for each point, the probability that it lies within a semicircle is $1/2$, but since the semicircle can be rotated, the events are not independent. So the probability that all $n$ points lie within some semicircle is the probability that there exists an angle $\theta$ such that all points lie within the semicircle from $\theta$ to $\theta+\pi$.

\medskip

This is similar to the problem of covering the circle with a semicircle. For points on the circumference, the probability is $n/2^{n-1}$, but for points inside the circle, it's different.

\medskip

Wait, maybe the probability is the same? Let me check for $n=2$. If the points are inside the circle, the probability that they lie within some semicircle. For two points, the probability that they lie within some semicircle is equal to the probability that the angle between them is less than or equal to $\pi$. Wait, but the angle between them can be up to $\pi$. If the angle is exactly $\pi$, then they are diametrically opposed, and there is no semicircle that contains both. If the angle is less than $\pi$, then there exists a semicircle that contains both. So the probability that two points lie within some semicircle is equal to the probability that the angle between them is less than $\pi$.

\medskip

But how do we compute that? For two points inside the circle, the angle between them is a random variable

\end{qualbox}

\begin{qualbox}{Another Example Where Only GRPO Exhausts Tokens}
\textbf{Question:}  In the diagram, $D$ and $E$ are the midpoints of $\overline{AB}$ and $\overline{BC}$ respectively. Determine the area of quadrilateral $DBEF$.
\begin{center}

\begin{asy}
size(180); defaultpen(linewidth(.7pt)+fontsize(10pt));
pair A, B, C, D, E, F;
A=(0,6);
B=(0,0);
C=(8,0);
D=(0,3);
E=(4,0);
F=(8/3,2);
draw(E--A--C--D);
draw((-1,0)--(10,0), EndArrow);
draw((0,-1)--(0,8), EndArrow);
label("$A(0,6)$", A, W);
label("$B(0,0)$", B, SW);
label("$C(8,0)$", C, S);
label("$D$", D, W);
label("$E$", E, S);
label("$F$", F, SW);
label("$x$", (10,0), dir(0));
label("$y$", (0,8), dir(90));
\end{asy}

\end{center}

\textbf{RLTT:} 
Okay, let's see. I need to find the area of quadrilateral $DBEF$. The points $D$ and $E$ are midpoints of $\overline{AB}$ and $\overline{BC}$, respectively. The coordinates are given: $A$ is $(0,6)$, $B$ is $(0,0)$, $C$ is $(8,0)$, $D$ is $(0,3)$, $E$ is $(4,0)$, and $F$ is $\left(\frac{8}{3}, 2\right)$.

\medskip

First, maybe I should plot these points to visualize the quadrilateral. $D$ is at $(0,3)$, $B$ is at $(0,0)$, $E$ is at $(4,0)$, and $F$ is at $\left(\frac{8}{3}, 2\right)$. So the quadrilateral is formed by connecting $D$ to $B$ to $E$ to $F$ and back to $D$.

To find the area, one method is to use the shoelace formula. Let me list the coordinates of the quadrilateral in order. The points are $D(0,3)$, $B(0,0)$, $E(4,0)$, $F\left(\frac{8}{3},2\right)$. Let's apply the shoelace formula.

\medskip

The formula is: For points $(x_1,y_1)$, $(x_2,y_2)$, \dots, $(x_n,y_n)$, area is
$$
\frac{\left|(x_1y_2+x_2y_3+\cdots+x_ny_1)-(y_1x_2+y_2x_3+\cdots+y_nx_1)\right|}{2}.
$$

So plugging in the coordinates:

\medskip

First part:
$$
(0\cdot 0) + (0\cdot 0) + (4\cdot 2) + \left(\frac{8}{3}\cdot 3\right)
= 0+0+8+8=16.
$$

Second part:
$$
(3\cdot 0) + (0\cdot 4) + \left(0\cdot \frac{8}{3}\right) + (2\cdot 0)
=0+0+0+0=0.
$$

Subtracting:
$$
16-0=16.
$$
Take absolute value and divide by 2:
$$
\frac{16}{2}=8.
$$
So the area is $8$.

Wait, that seems straightforward. Let me double-check. Maybe I made a mistake in the order of the points. The quadrilateral is $D$-$B$-$E$-$F$-$D$. So the coordinates in order are $(0,3)$, $(0,0)$, $(4,0)$, $\left(\frac{8}{3},2\right)$. Yes, that's correct. The shoelace calculation gives 8. So the answer should be 8.

$$
\boxed{8}
$$

\textbf{GRPO:} Okay, let's see. I need to find the area of quadrilateral $DBEF$. The problem says $D$ and $E$ are midpoints of $\overline{AB}$ and $\overline{BC}$ respectively. The Asymptote code gives coordinates for points $A$, $B$, $C$, $D$, $E$, and $F$. Let me first write down all the coordinates to visualize the problem better.

\medskip

So, $A$ is at $(0,6)$, $B$ is at $(0,0)$, $C$ is at $(8,0)$. $D$ is the midpoint of $\overline{AB}$, so since $A$ is $(0,6)$ and $B$ is $(0,0)$, the midpoint $D$ should be halfway between them. The $x$-coordinate is 0, and the $y$-coordinate is
$$
\frac{6+0}{2}=3.
$$
So $D$ is $(0,3)$, which matches the Asymptote code.

$E$ is the midpoint of $\overline{BC}$. $B$ is $(0,0)$ and $C$ is $(8,0)$, so the midpoint $E$ would be at
$$
\left(\frac{0+8}{2},\frac{0+0}{2}\right)=(4,0).
$$
That's also given in the code.

\medskip

$F$ is given as
$$
\left(\frac{8}{3},2\right).
$$
Hmm, I need to figure out where $F$ is. The Asymptote code draws $E--A--C--D$. So, $E$ is connected to $A$, $A$ to $C$, $C$ to $D$, and $D$ to... Wait, the code says \texttt{draw(E--A--C--D);} so the lines are $E$ to $A$, $A$ to $C$, $C$ to $D$, and $D$ to... Wait, maybe $D$ is connected back to $E$? But the code doesn't show that. Wait, maybe the figure is $E\!-\!A\!-\!C\!-\!D\!-\!E$? But the Asymptote code just draws $E--A--C--D$. So maybe the figure is a polygon $EACD$, but then $D$ is connected back to $E$? Not sure. But the problem mentions quadrilateral $DBEF$. So points $D,B,E,F$. Let me check the coordinates again.

Point $F$ is at
$$
\left(\frac{8}{3},2\right).
$$
Let me see. The Asymptote code draws $E--A--C--D$. So $E$ is $(4,0)$, $A$ is $(0,6)$, $C$ is $(8,0)$, $D$ is $(0,3)$. So the lines are $E$ to $A$, $A$ to $C$, $C$ to $D$, and $D$ to... Wait, maybe $D$ is connected back to $E$? But the code doesn't show that. Wait, maybe the figure is $E\!-\!A\!-\!C\!-\!D\!-\!E$? But the problem says quadrilateral $DBEF$. So points $D,B,E,F$. Let me plot these points mentally.

\medskip

Point $D$ is $(0,3)$, $B$ is $(0,0)$, $E$ is $(4,0)$, and $F$ is $\left(\frac{8}{3},2\right)$. So quadrilateral $DBEF$ is a four-sided figure with vertices at $(0,3)$, $(0,0)$, $(4,0)$, and $\left(\frac{8}{3},2\right)$. To find its area, maybe I can use the shoelace formula. Let me recall that the shoelace formula for a polygon with vertices $(x_1,y_1),(x_2,y_2),\dots,(x_n,y_n)$ is
$$
\frac12\left|\sum_{i=1}^n (x_i y_{i+1}-x_{i+1}y_i)\right|,
$$
where $x_{n+1}=x_1$ and $y_{n+1}=y_1$.

\medskip

So let's list the coordinates of quadrilateral $DBEF$ in order. Let's make sure the order is correct. The quadrilateral is $D\!-\!B\!-\!E\!-\!F\!-\!D$. So the coordinates are:
$$
D:(0,3),\quad B:(0,0),\quad E:(4,0),\quad F:\left(\frac{8}{3},2\right),
$$
then back to $D:(0,3)$.

\medskip

So applying the shoelace formula:

\medskip

First, list the coordinates in order:
$$
(0,3),\ (0,0),\ (4,0),\ \left(\frac{8}{3},2\right),\ (0,3).
$$

Compute the sum of $x_i y_{i+1}$:

\medskip

First pair: $x_1=0,y_1=3;\ x_2=0,y_2=0$,
$$
x_1y_2=0\cdot 0=0.
$$

Second pair: $x_2=0,y_2=0;\ x_3=4,y_3=0$,
$$
x_2y_3=0\cdot 0=0.
$$

Third pair: $x_3=4,y_3=0;\ x_4=\frac83,y_4=2$,
$$
x_3y_4=4\cdot 2=8.
$$

Fourth pair: $x_4=\frac83,y_4=2;\ x_5=0,y_5=3$,
$$
x_4y_5=\frac83\cdot 3=8.
$$

Fifth pair: $x_5=0,y_5=3;\ x_6=0,y_6=3$,
$$
x_5y_6=0\cdot 3=0.
$$

Sum of these terms:
$$
0+0+8+8+0=16.
$$

Now compute the sum of $y_i x_{i+1}$:

\medskip

First pair:
$$
y_1x_2=3\cdot 0=0.
$$

Second pair:
$$
y_2x_3=0\cdot 4=0.
$$

Third pair:
$$
y_3x_4=0\cdot \frac83=0.
$$

Fourth pair:
$$
y_4x_5=2\cdot 0=0.
$$

Fifth pair:
$$
y_5x_6=3\cdot 0=0.
$$

Sum of these terms:
$$
0+0+0+0+0=0.
$$

So the area is
$$
\frac12|16-0|=\frac12\cdot 16=8.
$$

Wait, that seems straightforward. But let me verify if the order of the points is correct. Because if the quadrilateral is not convex or if the points are not ordered correctly, the shoelace formula might give the wrong result.

\medskip

Let me check the order $D\!-\!B\!-\!E\!-\!F\!-\!D$.

Starting at $D(0,3)$, moving to $B(0,0)$, then to $E(4,0)$, then up to $F\left(\frac83,2\right)$, which is approximately $(2.666,2)$, then back to $D(0,3)$. Hmm, this seems to form a quadrilateral. Let me check if the lines cross each other. From $E(4,0)$ to $F\left(\frac83,2\right)$, which is moving left and up. Then from $F$ to $D(0,3)$, which is moving left and up. So the quadrilateral is a four-sided figure without crossing lines. So the order is correct.

\medskip

Alternatively, maybe I can divide the quadrilateral into triangles or other shapes to compute the area. Let's see.

Alternatively, since coordinates are given, maybe using coordinates is the easiest way. The shoelace formula gave me 8. Let me check again.

\medskip

Wait, let me recalculate the shoelace formula step by step to make sure.

Coordinates in order:
$$
D:(0,3),\quad B:(0,0),\quad E:(4,0),\quad F:\left(\frac83,2\right),
$$
back to $D:(0,3)$.

\medskip

Compute sum of $x_i y_{i+1}$:
$$
x_Dy_B=0\cdot 0=0,\quad
x_By_E=0\cdot 0=0,\quad
x_Ey_F=4\cdot 2=8,\quad
x_Fy_D=\frac83\cdot 3=8,
$$
so
$$
0+0+8+8=16.
$$

Sum of $y_i x_{i+1}$:
$$
y_Dx_B=3\cdot 0=0,\quad
y_Bx_E=0\cdot 4=0,\quad
y_Ex_F=0\cdot \frac83=0,\quad
y_Fx_D=2\cdot 0=0,
$$
so
$$
0+0+0+0=0.
$$

Fourth term: $y_4=2,\ x_5=0$. term4 = $y_4*x_5 = 2*0 = 0$

\medskip

Fifth term: $y_5=3,\ x_6=0$. term5 = $y_5*x_6 = 3*0 = 0$

\medskip

Sum of these terms: 0 + 0 + 0 + 0 + 0 = 0

\medskip

So the area is $1/2 |16 - 0| = 1/2 *16 = 8$.

\medskip

Wait, that seems straightforward. But let me verify if the order of the points is correct. Because if the quadrilateral is not convex or if the points are not ordered correctly, the shoelace formula might give the wrong result.

\medskip

Let me check the order $D\!-\!B\!-\!E\!-\!F\!-\!D$.

Starting at $D(0,3)$, moving to $B(0,0)$, then to $E(4,0)$, then up to $F\left(\frac83,2\right)$, which is approximately $(2.666,2)$, then back to $D(0,3)$. Hmm, this seems to form a quadrilateral. Let me check if the lines cross each other. From $E(4,0)$ to $F\left(\frac83,2\right)$, which is moving left and up. Then from $F$ to $D(0,3)$, which is moving left and up. So the quadrilateral is a four-sided figure without crossing lines. So the order is correct.

\medskip

Alternatively, maybe I can divide the quadrilateral into triangles or other shapes to compute the area. Let's see.

Alternatively, since coordinates are given, maybe using coordinates is the easiest way. The shoelace formula gave me 8. Let me check again.

\medskip

Wait, let me recalculate the shoelace formula step by step to make sure.

\medskip

Coordinates in order:

\medskip

$D:(0,3)$

$B:(0,0)$

$E:(4,0)$

$F:\left(\frac83,2\right)$

\medskip

Back to $D:(0,3)$

\medskip

Compute sum of $x_i y_{i+1}$:

\medskip

First term: $x_D\cdot y_B = 0\cdot 0 = 0$

\medskip

Second term: $x_B\cdot y_E = 0\cdot 0 = 0$

\medskip

Third term: $x_E\cdot y_F = 4\cdot 2 = 8$

\medskip

Fourth term: $x_F\cdot y_D = \left(\frac83\right)\cdot 3 = 8$

\medskip

Sum: $0 + 0 + 8 + 8 = 16$

\medskip

Sum of $y_i x_{i+1}$:

\medskip

First term: $y_D\cdot x_B = 3\cdot 0 = 0$

\medskip

Second term: $y_B\cdot x_E = 0\cdot 4 = 0$

\medskip

Third term: $y_E\cdot x_F = 0\cdot \left(\frac83\right) = 0$

\medskip

Fourth term: $y_F\cdot x $

\end{qualbox}

\end{document}